\newtheorem{definition}{Definition}
\newtheorem{fact}{Fact}
\newtheorem{theorem}{Theorem}
\newtheorem{lemma}{Lemma}
\newcommand{\impr}{I^*}
\newcommand{\reg}{\mbox{regret}}
\newcommand{\Ex}{\mathbb{E}}
\newcommand{\ExR}[1]{\mathbb{E}\left[ #1 \right]}
\newcommand{\PrR}[1]{\Pr\left( #1 \right)}
\newcommand{\condR}{\ \vline\  }
\newcommand{\comment}[1]{}
\newcommand{\xtwo}{x_i}
\newcommand{\xone}{y_i}
\newcommand{\Emu}{E^{\mu}}
\newcommand{\Etheta}{E^{\theta}}
\newcommand{\Beta}{\text{Beta}}
\title{Further Optimal Regret Bounds for Thompson Sampling}
 \author{Shipra Agrawal\\
 {shipra@microsoft.com}\\
 Microsoft Research India 
 \and
 {Navin Goyal} \\
 {navingo@microsoft.com}\\
 Microsoft Research India
 }
\begin{document}

\maketitle
\begin{abstract}
Thompson Sampling is one of the oldest heuristics for multi-armed bandit problems. It is a randomized algorithm based on Bayesian ideas, and has recently generated significant interest after several studies demonstrated it to have better empirical performance compared to the state of the art methods. In this paper, we provide a novel regret analysis for Thompson Sampling that simultaneously proves both the optimal problem-dependent bound of $(1+\epsilon)\sum_i \frac{\ln T}{\Delta_i}+O(\frac{N}{\epsilon^2})$ and the first near-optimal problem-independent bound of $O(\sqrt{NT\ln T})$ on the expected regret of this algorithm. Our near-optimal problem-independent bound solves a COLT 2012 open problem of
Chapelle and Li.  The optimal problem-dependent regret bound for this problem was first
proven recently by Kaufmann~et~al.~\cite{KaufmannMunos12}. 
Our novel martingale-based analysis techniques are conceptually simple, easily extend to distributions other than the Beta distribution, and also extend to the more general contextual bandits setting \cite{agrawal-contextual}. 
\end{abstract}
\section{Introduction}
Multi-armed bandit problem models the exploration/exploitation trade-off inherent in sequential decision problems. 
One of the early motivations for studying MAB problem was clinical trials: suppose that we have $N$ different treatments of unknown efficacy for a certain disease. Patients arrive sequentially, and we must decide on a treatment to administer for each arriving patient. To make this decision, we could learn from how the previous choices of treatments fared for the previous patients. After a sufficient number of trials, we may have a reasonable idea of which treatment is most effective, and from then on, we could administer that treatment for all the patients. However, initially, when there is no or very little information available, we need to \emph{explore} and try each treatment sufficient number of times. We wish to do this exploration in such a way that we can find the best treatment and start \emph{exploiting} it as soon as possible. The MAB problem is to decide how to choose the treatment for the next patient, given the outcomes of the treatments so far. Today, multi-armed bandit problem has a diverse set of applications some of which will be mentioned shortly. 

 Many versions and generalizations of the multi-armed bandit problem have been studied in the literature; in this paper we will consider a basic
and well-studied version of this problem: the stochastic multi-armed bandit problem. 
 Among many algorithms available for the stochastic bandit problem, some popular ones include Upper Confidence Bound (UCB) family of algorithms, (e.g., \cite{LaiR, Aueretal}, and more 
recently \cite{AudibertB09, Garivier11, Maillard11, Kaufmann12}), which have good theoretical guarantees, and the algorithm by \cite{Gittins}, which gives optimal strategy under Bayesian setting with known priors and geometric time-discounted rewards. 
 In one of the earliest works on stochastic bandit problems, \cite{Thompson} proposed a natural randomized Bayesian algorithm to minimize regret. 
  The basic idea is to assume a simple prior distribution on the parameters of the reward distribution of every arm, and at any time step, play an arm according to its posterior probability of being the best arm. This algorithm is known as \emph{Thompson Sampling} (TS), and it is a member of the family of \emph{randomized probability matching} algorithms. TS is a very natural
algorithm and the same idea has been rediscovered many times independently 
in the context of reinforcement learning, e.g., in \cite{Wyatt97, OrtegaB10, DBLP:conf/icml/Strens00}.
 We emphasize that although TS algorithm is a Bayesian approach, the description of the algorithm and our analysis apply to the prior-free stochastic multi-armed bandit model where parameters of the reward distribution of every arm are fixed, though unknown (see Section \ref{sec:MAB}). One could interpret the ``assumed" Bayesian priors as the current knowledge of the algorithm about the arms. 
Thus, our regret bounds for Thompson Sampling are directly comparable to the regret bounds for UCB family of algorithms which are a frequentist approach to the same problem. 
 
Recently, TS has attracted considerable attention. Several studies (e.g., \cite{Granmo, Scott, DBLP:conf/nips/ChapelleL11, MayL}) have empirically demonstrated the efficacy of Thompson Sampling: \cite{Scott} provides a detailed discussion of probability matching techniques in many general settings along with favorable empirical comparisons with other techniques. \cite{DBLP:conf/nips/ChapelleL11} demonstrate that empirically TS achieves regret comparable to the lower bound of \cite{LaiR}; and in applications like display advertising and news article recommendation, it is competitive to or better than popular methods such as UCB. In their experiments, TS is also more robust to delayed or batched feedback (delayed feedback means that the result of a play of an arm may become available only after some time delay, but we are required to make immediate decisions for which arm to play next) than the other methods. 
A possible explanation may be that TS is a randomized algorithm and so it is unlikely to 
get trapped in an early bad decision during the delay. 
Microsoft's adPredictor (\cite{GraepelCBH10}) for CTR prediction of search ads on Bing uses the idea of Thompson Sampling. 

Despite being easy to implement, competitive to the state of the art methods, and popular in practice,
TS lacked a strong theoretical analysis. \cite{Granmo, MayKLL} provide weak guarantees, namely, a bound of $o(T)$ on expected regret in time $T$. Significant progress was made in the recent work of \cite{AgrawalG12} and \cite{KaufmannMunos12}.  In \cite{AgrawalG12}, the first logarithmic bound on expected regret of TS algorithm were proven. \cite{KaufmannMunos12} provided a bound that matches the asymptotic lower bound of~\cite{LaiR} for this problem. However, both these bounds were problem dependent, i.e. the regret bounds are logarithmic in $T$ when the problem parameters, namely the mean rewards for each arm, and their differences, are assumed to be constants. The problem-independent bounds implied by these existing works were far from optimal. Obtaining a problem-independent bound that is close to the lower bound of $\Omega(\sqrt{NT})$ was also posed as an open problem by Chapaelle and Li~\cite{ChapelleL12}.

In this paper, we give a regret analysis for Thompson Sampling that provides both optimal problem-dependent and near-optimal problem-independent regret bounds for Thompson Sampling. Our novel martingale-based analysis technique is conceptually simple
(arguably simpler than the previous work). Our technique easily extends to distributions other than Beta distribution, and it also extends to the more general contextual bandits setting \cite{agrawal-contextual}. While the basic idea for the analysis in the contextual bandits setting of \cite{agrawal-contextual} is inspired by the idea in this paper, the details are substantially different.

Before stating our results, we describe the MAB problem and the TS algorithm formally. 



\subsection{The multi-armed bandit problem} 
\label{sec:MAB}
We consider the stochastic multi-armed bandit (MAB) problem: We are given a slot machine with $N$ arms; 
at each time step $t=1, 2, 3, \ldots $, one of the $N$ arms must be chosen to be played. 
Each arm $i$, when played, yields a random real-valued reward according to some fixed (unknown) distribution with support in $[0,1]$. 
The random reward obtained from playing an arm repeatedly are i.i.d. and independent of the plays of the other arms. 
The reward is observed immediately after playing the arm. 

An algorithm for the MAB problem must decide which arm to play at each time step $t$, based on the outcomes of the previous $t-1$ plays. 
Let $\mu_i$ denote the (unknown) expected reward for arm $i$.
A popular goal is to maximize the expected total reward in time $T$, i.e., $\Ex[\sum_{t=1}^{T} \mu_{i(t)}]$, where $i(t)$ is the arm played 
in step $t$, and the expectation is over the random choices of $i(t)$ made by the algorithm. 
It is more convenient
to work with the equivalent measure of expected total \emph{regret}: the amount we lose because of not playing optimal arm in each step. 
To formally define regret, let us introduce some notation. 
Let $\mu^*:=\max_i \mu_i$, and $\Delta_i := \mu^* - \mu_i$. Also, let $k_i(t)$ denote the number of times arm $i$ has been played up to step $t-1$.  Then the
expected total regret in time $T$ is given by \vspace{-0.05in}
$$\mbox{$\ExR{{\cal R}(T)}= \ExR{\sum_{t=1}^{T} (\mu^{*}-\mu_{i(t)})} = \sum_{i} \Delta_i \cdot \ExR{k_i(T+1)}. $}$$
Other performance measures include PAC-style guarantees; we do not consider those measures here. 


\subsection{Thompson Sampling}
We provide the details of Thompson Sampling algorithm and our analysis for the Bernoulli bandit problem, i.e. when the rewards are either $0$ or $1$, and for arm $i$ the probability of success (reward =$1$) is $\mu_i$. This description of Thompson Sampling follows closely that of \cite{DBLP:conf/nips/ChapelleL11}. A simple extension of this algorithm to general reward distributions with support $[0,1]$ is described in \cite{AgrawalG12}, which seamlessly extends our analysis for Bernoulli bandits to general stochastic bandit problem.

The algorithm for Bernoulli bandits maintains Bayesian priors on the Bernoulli means $\mu_i$'s. Beta distribution turns out to be a very convenient 
choice of priors for Bernoulli rewards. 
Let us briefly recall that beta distributions form a family of continuous probability distributions on the interval $(0,1)$. The pdf of $\Beta(\alpha, \beta)$, the beta distribution with parameters $\alpha > 0$, $\beta > 0$, is given by $f(x; \alpha, \beta) = \frac{\Gamma(\alpha+\beta)}{\Gamma(\alpha)\Gamma(\beta)}x^{\alpha-1}(1-x)^{\beta-1}$. The mean of $\Beta(\alpha, \beta)$ is $\alpha/(\alpha+\beta)$; and as is apparent from the pdf, higher the $\alpha, \beta$, tighter is the concentration of $\Beta(\alpha, \beta)$ around the mean. 
Beta distribution is useful for Bernoulli rewards 
because if the prior is a $\Beta(\alpha, \beta)$ distribution, then after observing a Bernoulli trial, the posterior distribution is simply $\Beta(\alpha+1, \beta)$ or $\Beta(\alpha, \beta+1)$, depending on whether the trial resulted in a success or failure, respectively. 

The Thompson Sampling algorithm initially assumes arm $i$ to have prior $\Beta(1, 1)$ on $\mu_i$, which is
natural because $\Beta(1,1)$ is the uniform distribution on $(0,1)$. At time $t$, having observed $S_i(t)$ successes (reward = $1$) and $F_i(t)$  failures (reward = $0$) in $k_i(t) = S_i(t)+F_i(t)$ plays of arm $i$, the algorithm updates the distribution on $\mu_i$ as $\Beta(S_i(t)+1, F_i(t)+1)$. 
The algorithm then samples from these posterior distributions of the $\mu_i$'s, and plays an arm according to the probability of its mean being the largest. We summarize the Thompson Sampling algorithm below. 

  \begin{algorithm}[H]
  \caption{Thompson Sampling for Bernoulli bandits}
For each arm $i=1,\ldots, N$ set $S_i=0, F_i=0$.\\
\ForEach{$t=1, 2, \ldots,$}{
		For each arm $i=1,\ldots, N$, sample $\theta_i(t)$ from the $\Beta(S_i+1, F_i+1)$ distribution. \\
  	Play arm $i(t) := \arg \max_i \theta_i(t)$ and observe reward $r_t$.\\
  	If $r_t=1$, then $S_{i(t)}=S_{i(t)}+1$, else $F_{i(t)}=F_{i(t)}+1$.
  	}
\end{algorithm} \vspace{-0.05in}

\subsection{Our results}
In this article, we bound the \emph{finite time} expected regret of Thompson Sampling. 
From now on we will assume that the first arm is the unique optimal arm, i.e., $\mu^* = \mu_1 > \arg \max_{i\ne 1} \mu_i$. Assuming that the first arm is an optimal arm is a matter of convenience for stating the results and for the analysis and of course
the algorithm does not use this assumption. 
The assumption of \emph{unique} optimal arm is also without loss of generality, since adding more arms with $\mu_i=\mu^*$ can only decrease the expected regret; details of this argument were provided in \cite{AgrawalG12}.
\begin{theorem}
\label{th:problem-dependent} {\bf(Problem-dependent bound)}
For the $N$-armed stochastic bandit problem, Thompson Sampling algorithm has expected regret
$$\Ex[{\cal R}(T)] \leq (1+\epsilon) \sum_{i=2}^N \frac{\ln T}{d(\mu_i,\mu_1)} \Delta_i+ O(\frac{N}{\epsilon^2})  $$
in time $T$, where $d(\mu_i,\mu_1)=\mu_i \log \frac{\mu_i}{\mu_1} + (1-\mu_i) \log \frac{(1-\mu_i)}{(1-\mu_1)}$. The big-Oh notation \footnote{For any two functions $f(n),g(n)$, $f(n)=O(g(n))$ if there exist two constants $n_0$ and $c$ such that for all $n\ge n_0$, $f(n) \le cg(n)$.} in above assumes $\mu_i, \Delta_i, i=1, \ldots, N$ to be constants.
\end{theorem}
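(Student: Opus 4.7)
The starting point is the identity $\Ex[\mathcal{R}(T)] = \sum_{i=2}^{N} \Delta_i\, \Ex[k_i(T+1)]$, which reduces the problem to bounding $\Ex[k_i(T+1)] = \sum_{t=1}^T \Pr[I(t)=i]$ for each suboptimal arm $i$ (where $I(t)$ denotes the arm played at time $t$). For each such arm I would pick thresholds $\mu_i < x_i < y_i < \mu_1$ satisfying $d(x_i,y_i) \geq d(\mu_i,\mu_1)/(1+\epsilon)$, with both gaps $y_i-x_i$ and $\mu_1-y_i$ of order $\Theta(\epsilon)$; this is possible by continuity of the Bernoulli KL. Define the events $E_i^{\mu}(t) := \{\hat\mu_i(t)\leq x_i\}$ (the empirical mean of arm $i$ is small) and $E_i^{\theta}(t) := \{\theta_i(t)\leq y_i\}$ (the posterior draw for arm $i$ is small), and split $\sum_t \Pr[I(t)=i]$ into three terms according to whether $\overline{E_i^{\mu}(t)}$ holds, or $E_i^{\mu}(t)\cap\overline{E_i^{\theta}(t)}$ holds, or $E_i^{\mu}(t)\cap E_i^{\theta}(t)$ holds.

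\paragraph{The two routine terms.} The first piece, $\sum_t \Pr[I(t)=i,\overline{E_i^{\mu}(t)}]$, reflects the rare event that $\hat\mu_i(t)$ overshoots $x_i > \mu_i$; the standard Chernoff/KL tail yields $\Pr[\overline{E_i^{\mu}(t)}\mid k_i(t)=n]\leq e^{-n\,d(x_i,\mu_i)}$, and since $k_i$ only increments on plays of arm $i$, summing over $t$ contributes at most $O(1/d(x_i,\mu_i)) = O(1/\epsilon^2)$ per arm. The second piece, $\sum_t \Pr[I(t)=i, E_i^{\mu}(t),\overline{E_i^{\theta}(t)}]$, captures the Beta sample $\theta_i(t)$ exceeding $y_i$ despite a small empirical mean; the Beta--Binomial identity $\Pr[\Beta(s{+}1,f{+}1)>y] = \Pr[\mathrm{Bin}(s{+}f{+}1,y)\leq s]$ reduces this to a binomial tail, and Chernoff then produces the dominant contribution $\ln T / d(x_i,y_i) + O(1) \leq (1+\epsilon)\ln T/d(\mu_i,\mu_1) + O(1)$.

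\paragraph{Main coupling lemma.} The third piece is where the novel analysis enters. On the event $\{I(t)=i\}\cap E_i^{\theta}(t)$ we have $\theta_1(t)\leq \theta_i(t)\leq y_i$, so arm $1$'s posterior sample has undershot $y_i$. Writing $p_{i,t} := \Pr[\theta_1(t)>y_i \mid \mathcal{F}_{t-1}]$, the key inequality I would prove is
\begin{equation*}
\Pr[I(t)=i,\,\theta_1(t)\leq y_i \mid \mathcal{F}_{t-1}] \;\leq\; \frac{1-p_{i,t}}{p_{i,t}}\;\Pr[I(t)=1 \mid \mathcal{F}_{t-1}].
\end{equation*}
The intuition: once we condition on $\mathcal{F}_{t-1}$ and on the draws $\{\theta_j(t)\}_{j\neq 1}$, the only randomness left is $\theta_1(t)$, and the ratio $(1-p_{i,t})/p_{i,t}$ expresses the odds that $\theta_1(t)$ lies below $y_i$ rather than above, so that one can compare the chance of arm $i$ beating an under-sampled $\theta_1$ to the chance of arm $1$ winning outright. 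Summing over $t$ and re-indexing by the number $k$ of plays of arm $1$, $p_{i,\tau_k}$ becomes a Beta$(s{+}1, k{-}s{+}1)$ tail above $y_i$ whose mean $s/k$ concentrates around $\mu_1 > y_i + \Omega(\epsilon)$; a Chernoff estimate on Beta then shows that $\Ex[1/p_{i,\tau_k}-1]$ decays exponentially in $k$, so the entire sum is $O(1/\epsilon^2)$.

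\paragraph{Combination and the main difficulty.} Adding the three contributions and multiplying by $\Delta_i$ before summing over $i$ gives the claimed bound $(1+\epsilon)\sum_{i\geq 2}\Delta_i \ln T/d(\mu_i,\mu_1) + O(N/\epsilon^2)$. The chief obstacle is the coupling inequality together with the subsequent bound on $\sum_k \Ex[1/p_{i,\tau_k}-1]$: both steps exploit the precise randomization structure of Thompson Sampling (rather than any generic concentration), and driving the additive term down to $O(1/\epsilon^2)$ rather than $O(\ln T/\epsilon^2)$ is what forces a sharp Chernoff-type analysis of the Beta tail, instead of the cruder Markov-style bounds that sufficed in \cite{AgrawalG12}.
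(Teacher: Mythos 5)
Your plan follows the paper's route step for step: the same three-way decomposition over $E^{\mu}_i(t)$ and $E^{\theta}_i(t)$, the same Chernoff bound for the empirical-mean term, the same Beta--Binomial reduction giving the $\ln T/d(x_i,y_i)$ term, and the same re-indexing of the main term by the plays of arm $1$ with a bound on $\Ex[1/p_{i,\tau_k}-1]$. However, the ``key inequality'' you propose to prove is false as stated: the event $\{\theta_i(t)\le y_i\}$ must appear on the left-hand side. Concretely, take $N=2$ and a history $\mathcal{F}_{t-1}$ in which arm $2$ has been played many times with empirical mean near $0.9>y_2$, so $\theta_2(t)$ is concentrated near $0.9$, while arm $1$ has only moderate data, so $p_{2,t}=\Pr(\theta_1(t)>y_2\mid\mathcal{F}_{t-1})$ is a constant but $\Pr(\theta_1(t)>0.9\mid\mathcal{F}_{t-1})$ is tiny. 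Then $\Pr(I(t)=2,\theta_1(t)\le y_2\mid\mathcal{F}_{t-1})\approx 1-p_{2,t}$, whereas $\frac{1-p_{2,t}}{p_{2,t}}\Pr(I(t)=1\mid\mathcal{F}_{t-1})\approx\frac{1-p_{2,t}}{p_{2,t}}\Pr(\theta_1(t)>0.9\mid\mathcal{F}_{t-1})$ can be made arbitrarily smaller. The step in your intuition that breaks is the lower bound ``arm $1$ wins whenever $\theta_1(t)>y_i$'': this is only valid when \emph{all} suboptimal samples are at most $y_i$, i.e.\ on $M_i(t)\cap\{\theta_i(t)\le y_i\}$ where $M_i(t)$ is the event that arm $i$ is maximal among the suboptimal arms. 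This is exactly why the paper's Lemma~\ref{lem:main} keeps $E^{\mu}_i(t)$ and $E^{\theta}_i(t)$ inside the probabilities on \emph{both} sides, $\Pr(i(t)=i,E^{\mu}_i(t),E^{\theta}_i(t)\mid\mathcal{F}_{t-1})\le\frac{1-p_{i,t}}{p_{i,t}}\Pr(i(t)=1,E^{\mu}_i(t),E^{\theta}_i(t)\mid\mathcal{F}_{t-1})$. Since you only ever apply your inequality to the third piece, on which $\theta_i(t)\le y_i$ does hold, the fix is simply to restore that event in the statement and in the conditioning argument; with that repair your proof coincides with the paper's.

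Two smaller points. First, your threshold specification is internally inconsistent: requiring $d(x_i,y_i)\ge d(\mu_i,\mu_1)/(1+\epsilon)$ is incompatible with $y_i-x_i=\Theta(\epsilon)$ (which would force $d(x_i,y_i)=\Theta(\epsilon^2)$); you presumably meant $x_i-\mu_i=\Theta(\epsilon)$ and $\mu_1-y_i=\Theta(\epsilon)$, which is the consistent choice (the paper instead fixes $d(x_i,\mu_1)=d(\mu_i,\mu_1)/(1+\epsilon)$ and $d(x_i,y_i)=d(\mu_i,\mu_1)/(1+\epsilon)^2$, and rescales $\epsilon$ at the end). Second, ``a Chernoff estimate on Beta shows $\Ex[1/p_{i,\tau_k}]-1$ decays exponentially'' is the right plan but understates the work: $1/p_{i,\tau_k}$ is exponentially large on low-probability histories with few successes of arm $1$, so one must control $\sum_s f^B_{k,\mu_1}(s)/F^B_{k+1,y_i}(s)$ termwise; the paper does this in Lemma~\ref{lem:pBound} using Je\v r\'abek's estimates for partial binomial sums and obtains exponential decay only for $k\ge 8/(\mu_1-y_i)$, with a separate $O(1/(\mu_1-y_i))$ bound for smaller $k$. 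The totals still sum to the $O(1/\epsilon^2)$ additive term per arm, so this affects only the level of detail, not the final bound.
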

\begin{theorem}
\label{th:problem-independent} {\bf(Problem-independent bound)}
For the $N$-armed stochastic bandit problem, Thompson Sampling algorithm has expected regret
$$\Ex[{\cal R}(T)] \leq O(\sqrt{NT\ln T})$$
in time $T$, where the big-Oh notation hides only the absolute constants.
\end{theorem}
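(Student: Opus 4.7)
The plan is to reduce the problem-independent bound to a refinement of the problem-dependent analysis via a standard thresholding (peeling) argument. Fix a parameter $\Delta > 0$ to be chosen later, and partition the suboptimal arms into a ``narrow-gap'' set $S = \{i \ne 1 : \Delta_i \le \Delta\}$ and a ``wide-gap'' set $B = \{i \ne 1 : \Delta_i > \Delta\}$. The total regret splits accordingly as $\Ex[{\cal R}(T)] = \sum_{i \in S} \Delta_i \Ex[k_i(T+1)] + \sum_{i \in B} \Delta_i \Ex[k_i(T+1)]$, and the two pieces are controlled by different means.

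For narrow-gap arms, bound the contribution trivially by
$$\sum_{i \in S} \Delta_i \Ex[k_i(T+1)] \;\le\; \Delta \sum_{i \in S} \Ex[k_i(T+1)] \;\le\; \Delta \cdot T,$$
since the total number of plays across all arms is exactly $T$. For wide-gap arms, the plan is to invoke the per-arm bound underlying Theorem~\ref{th:problem-dependent} in a form where the constants are \emph{absolute} rather than problem-dependent. Taking $\epsilon$ to be a fixed constant (say $\epsilon = 1$) and combining with Pinsker's inequality $d(\mu_i,\mu_1) \ge 2\Delta_i^2$, one expects an inequality of the shape $\Ex[k_i(T+1)] \le c \ln T / \Delta_i^2 + O(1)$ for an absolute constant $c$. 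Given such a bound, the wide-gap contribution is at most
$$\sum_{i \in B} \Delta_i \Ex[k_i(T+1)] \;\le\; \sum_{i \in B} \frac{c \ln T}{\Delta_i} + O(N) \;\le\; \frac{c N \ln T}{\Delta} + O(N).$$

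Choosing $\Delta = \sqrt{N \ln T / T}$ balances the two contributions: $\Delta T = \sqrt{NT \ln T}$ matches $N \ln T / \Delta = \sqrt{NT \ln T}$, and the additive $O(N)$ is absorbed, giving the claimed $O(\sqrt{NT \ln T})$ bound. The main obstacle is the step in which the per-arm bound is used with absolute constants: Theorem~\ref{th:problem-dependent} as stated hides factors depending on $\mu_i$ and $\Delta_i$ inside its big-Oh, and these factors can in principle grow as $\Delta_i$ shrinks toward the threshold $\Delta$. Overcoming this requires re-instrumenting the martingale arguments that prove Theorem~\ref{th:problem-dependent}, selecting parameters inside that analysis so that the coefficient in front of $\ln T / \Delta_i^2$ and the lower-order additive term remain controlled uniformly in the problem parameters. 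Because the paper emphasizes that the same martingale analysis delivers both theorems, this recalibration is presumably carried out in a single unified argument rather than as a black-box reduction from Theorem~\ref{th:problem-dependent} to Theorem~\ref{th:problem-independent}.
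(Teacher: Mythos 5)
Your overall strategy is the same as the paper's: obtain a per-arm bound $\Ex[k_i(T)] = O(\ln T/\Delta_i^2)$ with absolute constants from the martingale machinery, then threshold at $\Delta = \sqrt{N\ln T/T}$, bounding the small-gap arms' contribution by $\Delta T$ and the large-gap arms' by $O(N\ln T/\Delta)$. Your final peeling step is fine (and in fact slightly cleaner than the paper's phrasing, which instead says that arms with $\Delta_i < \sqrt{N\ln T/T}$ ``can be assumed to be optimal'' and invokes the fact that extra optimal arms only help). The problem is that the one step carrying all the content --- the per-arm bound with constants uniform in $\mu_i,\Delta_i$ --- is exactly the step you leave as a presumption. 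Your suggestion to take $\epsilon=1$ in the Theorem~\ref{th:problem-dependent} parametrization and then apply Pinsker does not deliver it: in that parametrization the thresholds are defined by KL conditions such as $d(\xtwo,\mu_1)=d(\mu_i,\mu_1)/(1+\epsilon)$, and the resulting bounds hide genuinely problem-dependent factors, e.g.\ $\xtwo-\mu_i \ge \frac{\epsilon}{1+\epsilon}\,d(\mu_i,\mu_1)/\ln\bigl(\tfrac{\mu_1(1-\mu_i)}{\mu_i(1-\mu_1)}\bigr)$, whose denominator blows up as $\mu_i\to 0$ or $1$, and the Lemma~\ref{lem:pBound} sum is $\Theta(1)$ only when $\Delta'_i$ and $D_i$ are treated as constants. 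So the black-box reduction fails for the reason you yourself flag, and ``presumably the recalibration works'' is precisely where a proof is required.

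The paper closes this gap by a different, explicit choice of thresholds inside the same analysis: $\xtwo=\mu_i+\Delta_i/3$ and $\xone=\mu_1-\Delta_i/3$, so that $\Delta_i'=\mu_1-\xone=\Delta_i/3$ and, by Pinsker, $d(\xtwo,\mu_i)\ge \Delta_i^2/18$ and $d(\xtwo,\xone)\ge\Delta_i^2/18$. This makes $L_i(T)\le 18\ln T/\Delta_i^2$ and $1/d(\xtwo,\mu_i)\le 18/\Delta_i^2$ with absolute constants, and the sum from Lemma~\ref{lem:pBound} is then bounded term by term as $\sum_{j=0}^{T-1}\Theta\bigl(e^{-\Delta_i'^2 j/2}+\tfrac{1}{(j+1)\Delta_i'^2}+\tfrac{4}{\Delta_i'^2 j}\bigr)=\Theta(\ln T/\Delta_i^2)$ --- note it is \emph{not} $O(1)$ for small gaps, contrary to the additive $O(1)$ in the shape you posit, though this does not hurt since after multiplying by $\Delta_i$ it still contributes $O(\ln T/\Delta_i)\le O(\ln T/\Delta)$ per wide-gap arm. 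With that per-arm bound in hand, your thresholding computation completes the proof exactly as in the paper. So: right route, correct endgame, but the crucial uniform-constant estimate must be proved by re-choosing the thresholds as above rather than assumed.
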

Let us contrast our bounds with the previous work. Let us first consider the problem-dependent regret bounds, i.e., regret bounds that depend on problem parameters $\mu_i, \Delta_i, i=1,\ldots, N$. Lai and Robbins~\cite{LaiR} essentially proved the following lower bound on the regret of any bandit algorithm (see \cite{LaiR} for a precise statement): 
$$\Ex[{\cal R}(T)] \geq \left[\sum_{i=2}^{N} \frac{\Delta_i}{d(\mu_i, \mu_1)} + o(1) \right] \ln{T}. $$
They also gave algorithms asymptotically achieving this guarantee, though unfortunately their algorithms are not efficient. Auer et al.~\cite{Aueretal} gave the UCB1 algorithm, which is efficient and achieves the following bound: 
$$\Ex[{\cal R}(T)] \le \left[8 \sum_{i=2}^N \frac{1}{\Delta_i} \right] \ln{T} + (1+\pi^2/3)\left(\sum_{i=2}^{N} \Delta_i\right).$$
More recently, Kaufmann et al.~\cite{Kaufmann12} gave Bayes-UCB algorithm which achieves 
the lower bound of \cite{LaiR} for Bernoulli rewards. Bayes-UCB is a UCB like algorithm, where the upper confidence bounds are based on the quantiles of Beta posterior distributions. Interestingly, these upper confidence bounds turn out to be similar to those used by algorithms in \cite{Garivier11} and \cite{Maillard11}. Our bounds in Theorem \ref{th:problem-dependent} achieve the asymptotic lower bounds of \cite{LaiR}, and match those provided by \cite{KaufmannMunos12} for Thompson Sampling.

 Theorem \ref{th:problem-independent} shows that Thompson Sampling also achieves a problem independent regret bound of $O(\sqrt{NT\ln T})$ on regret. This is the first analyis for TS that matches the $\Omega(\sqrt{NT})$ problem-inpdependent lower bound (see \cite{bubeck:regret}) for this problem within logarithmic factors. 
The  problem-dependent bounds in the existing work implied only suboptimal problem-independent bounds: \cite{AgrawalG12} implied a problem independent bound of $O(T^{2/3})$. In \cite{KaufmannMunos12}, the additive problem dependent term was not explicitly calculated, which makes it difficult to derive the corresponding problem independent bound, but on a preliminary examination, it appears that it would involve an even higher power of $T$. To compare with other existing algorithms for this problem, note that the best known problem-independent bound for the expected regret of UCB1 is also $O(\sqrt{NT\ln T})$ (see \cite{bubeck:regret}). More recently, Audibert and Bubeck~\cite{AudibertB09} gave an algorithm MOSS, inspired by UCB1, with regret $O(\sqrt{NT})$.

\section{Proofs}
In this section, we prove Theorem \ref{th:problem-dependent} and Theorem \ref{th:problem-independent}. The proofs of the two theorems follow the same steps, and diverge only towards the end of the analysis.

\paragraph{Proof Outline:} Our proof uses a martingale based analysis. Essentially, we prove that conditioned on any history of execution in the preceding steps, the probability of playing any suboptimal arm $i$ at the current step can be bounded by a linear function of the probability of playing the optimal arm at the current step. This is proven in Lemma \ref{lem:main}, which forms the core of our analysis. Further, we show that the coefficient in this linear function decreases exponentially fast with the increase in the number of plays of optimal arm (refer to Lemma \ref{lem:pBound}), this allows us to bound the total number of plays of every suboptimal arm, to bound the regret as desired. The difference between the analysis for obtaining the logarithmic problem-dependent bound of Theorem \ref{th:problem-dependent}, and the problem-independent bound of Theorem \ref{th:problem-independent} is merely technical, and occurs only towards the end of the proof.\newline\\
We recall some of the definitions introduced earlier, and introduce some new notations used in the proof.
$F^B_{n,p}(\cdot)$ denotes the cdf and $f^B_{n,p}(\cdot)$ denotes the probability mass function of the binomial distribution 
with parameters $n, p$. Let $F^{beta}_{\alpha, \beta}(\cdot)$ denote the cdf of the beta distribution with parameters $\alpha, \beta$.

\begin{definition}
$k_i(t)$ is defined as the number of plays of arm $i$ until time $t-1$, and $S_i(t)$ as the number of successes among the plays of arm $i$ until time $t-1$. Also, $i(t)$ denotes the arm played at time $t$.
\end{definition}
\begin{definition}
For each arm $i$, we will choose two thresholds $\xtwo$ and $\xone$ such that $\mu_i<\xtwo<\xone < \mu_1$. 
The specific choice of these thresholds will depend on whether we are proving problem-dependent bound or problem-independent 
bound, and will be described at the approporiate points in the proof. 
Define $L_i(T)=\frac{\ln T}{d(\xtwo, \xone)}$, and $\hat{\mu}_i(t) = S_i(t)/k_i(t)$ (define $\hat{\mu}_i(t)=1$ when $k_i(t)=0$). Define $\Emu_i(t)$ as the event that $\hat{\mu}_i(t) \le \xtwo$. Define $\Etheta_i(t)$ as the event that $\theta_i(t) \le \xone$. 

Intuitively, $\Emu_i(t)$, $\Etheta_i(t)$ are the events that $\hat{\mu}_i(t)$ and $\theta_i(t)$, respectively, are not too far from the mean $\mu_i$. As we show later, these events will hold with high probability for most time steps.
\end{definition}
\begin{definition}
Define filtration ${\cal F}_{t-1}$ as the history of plays until time $t-1$, i.e. 
$${\cal F}_{t-1} = \{i(w), r_{i(w)}(w), i=1,\ldots, N, w=1,\ldots, t-1\},$$
where $i(t)$ denotes the arm played at time $t$, and $r_{i}(t)$ denotes the reward observed for arm $i$ at time $t$.
\end{definition}
\begin{definition}
Define, $p_{i,t}$ as the probability 
$$ p_{i,t}= \Pr(\theta_1(t) > \xone | {\cal F}_{t-1}).$$
Note that $p_{i,t}$ is determined by ${\cal F}_{t-1}$. 
\end{definition}

\begin{lemma}
\label{lem:main}
For all $t \in [1,T]$, and $i\ne 1$,
$$ \PrR{i(t)=i, \Emu_i(t), \Etheta_i(t)) \condR {\cal F}_{t-1}} \le  \frac{(1-p_{i,t})}{p_{i,t}} \PrR{i(t)=1, \Emu_i(t), \Etheta_i(t) \condR {\cal F}_{t-1}},$$
where $p_{i,t}= \Pr(\theta_1(t) > \xone | {\cal F}_{t-1})$.
%
\end{lemma}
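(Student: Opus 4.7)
The plan is to exploit two structural facts about the Thompson Sampling update rule. First, $E^\mu_i(t)$ is $\mathcal{F}_{t-1}$-measurable (it depends only on $\hat\mu_i(t) = S_i(t)/k_i(t)$, which is determined by the history), so conditional on $\mathcal{F}_{t-1}$ it is deterministic: if $E^\mu_i(t)$ fails, both sides of the claim are $0$, and if it holds we may drop it from both sides. Second, given $\mathcal{F}_{t-1}$, the samples $\theta_1(t), \ldots, \theta_N(t)$ are mutually independent, because each is drawn independently from its $\mathcal{F}_{t-1}$-measurable posterior $\Beta(S_j(t)+1, F_j(t)+1)$.

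The key step will be to introduce the auxiliary event $G_i(t) := \{\theta_i(t) = \max_{j \ne 1} \theta_j(t)\}$ that arm $i$ wins the ``tournament'' among the non-optimal arms. Two set relations will then reduce the $N$-arm comparison to a conceptual two-arm one: I will verify that
\[
\{i(t) = i\} \cap E^\theta_i(t) \;=\; G_i(t) \cap E^\theta_i(t) \cap \{\theta_1(t) \le \theta_i(t)\},
\]
while $G_i(t) \cap E^\theta_i(t) \cap \{\theta_1(t) > y_i\} \;\subseteq\; \{i(t) = 1\} \cap E^\theta_i(t)$, the latter because on this intersection $\theta_1(t) > y_i \geq \theta_i(t) = \max_{j \ne 1} \theta_j(t)$, which forces arm $1$ to attain the overall maximum.

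Next, I will condition on $(\theta_j(t))_{j \ne 1}$, which jointly determines $G_i(t)$ and $E^\theta_i(t)$, and take the remaining expectation over $\theta_1(t)$ using independence. On $E^\theta_i(t)$, $\theta_i(t) \le y_i$, so monotonicity of the CDF of $\theta_1(t)$ gives $\Pr(\theta_1(t) \le \theta_i(t) \mid \mathcal{F}_{t-1}, \theta_i(t)) \le \Pr(\theta_1(t) \le y_i \mid \mathcal{F}_{t-1}) = 1 - p_{i,t}$. Taking outer expectation yields
\[
\Pr\bigl(i(t) = i,\, E^\theta_i(t) \,\big|\, \mathcal{F}_{t-1}\bigr) \;\le\; (1 - p_{i,t})\, \Pr\bigl(G_i(t),\, E^\theta_i(t) \,\big|\, \mathcal{F}_{t-1}\bigr).
\]
Symmetrically, $\{\theta_1(t) > y_i\}$ depends only on $\theta_1(t)$ and factors out by independence, giving
\[
\Pr\bigl(i(t) = 1,\, E^\theta_i(t) \,\big|\, \mathcal{F}_{t-1}\bigr) \;\ge\; p_{i,t}\, \Pr\bigl(G_i(t),\, E^\theta_i(t) \,\big|\, \mathcal{F}_{t-1}\bigr).
\]
Dividing the two inequalities and reinstating $E^\mu_i(t)$ on both sides will complete the proof.

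The main subtlety, and what drives the introduction of $G_i(t)$, is cleanly reducing from $N$ competing samples to a comparison involving only $\theta_1(t)$ against a fixed threshold: freezing $(\theta_j(t))_{j \ne 1}$ isolates the randomness in $\theta_1(t)$, after which the inequalities $\theta_i(t) \le y_i$ (from $E^\theta_i$) and $\theta_1(t) > y_i$ (defining $p_{i,t}$) make the factor $(1 - p_{i,t})/p_{i,t}$ emerge naturally.
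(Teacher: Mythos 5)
Your proposal is correct and follows essentially the same route as the paper: your tournament event $G_i(t)$ is exactly the paper's event $M_i(t)$ (arm $i$ beats all suboptimal arms), and both arguments hinge on the conditional independence of $\theta_1(t)$ from $(\theta_j(t))_{j\ne 1}$ given ${\cal F}_{t-1}$ to prove the two inequalities $\Pr(i(t)=i,\Etheta_i(t)\mid{\cal F}_{t-1})\le(1-p_{i,t})\Pr(M_i(t),\Etheta_i(t)\mid{\cal F}_{t-1})$ and $\Pr(i(t)=1,\Etheta_i(t)\mid{\cal F}_{t-1})\ge p_{i,t}\Pr(M_i(t),\Etheta_i(t)\mid{\cal F}_{t-1})$. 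The only cosmetic differences are that you state the first reduction as a set equality (which holds only almost surely, ties having probability zero for continuous Beta samples, and only the inclusion is needed) and that you compare $\theta_1(t)$ to $\theta_i(t)$ before relaxing to the threshold $\xone$, whereas the paper relaxes to $\xone$ immediately.
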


\comment{
\begin{proof}
Note that whether $\Emu_i(t)$ is true or not is determined by ${\cal F}_{t-1}$. Assume that filtration ${\cal F}_{t-1}$ is such that $\Emu_i(t)$ is true (otherwise the probability on the left hand side is $0$ and the inequality is trivially true).
It then suffices to prove that  
$$\PrR{i(t)=i  \condR \Etheta_i(t), {\cal F}_{t-1}} \le  \frac{(1-p_{i,t})}{p_{i,t}} \PrR{i(t)=1 \condR \Etheta_i(t), {\cal F}_{t-1}}. $$
Let $M_i(t)$ denote the event that arm $i$ exceeds all the suboptimal arms at time $t$. That is, 
$$M_i(t): \theta_{i}(t) \ge \theta_j(t), \forall j\ne 1.$$
Now, given $M_i(t), \Etheta_i(t)$, it holds that
for all $j\ne i, j\ne 1$,
$$\theta_j(t) \le \theta_i(t) \le \xone,$$
$$\Pr(i(t)=1 \condR M_i(t), \Etheta_i(t), {\cal F}_{t-1}) \ge \Pr(\theta_1(t) > \xone \condR M_i(t), \Etheta_i(t), {\cal F}_{t-1}) = \Pr(\theta_1(t) > \xone \condR {\cal F}_{t-1}) = p_{i,t} .$$
The second last equality follows because the events $M_i(t)$ and $\Etheta_i(t), \forall i\ne 1$ involve conditions on only $\theta_j(t)$ for $j\ne 1$, and given ${\cal F}_{t-1}$ (and hence $\hat{\mu}_j(t), k_j(t), \forall j$), $\theta_1(t)$ is independent of all the other $\theta_j(t), j\ne 1$, and hence independent of these events.
Combining, we get
$$\PrR{i(t)=1 \condR \Etheta_i(t), {\cal F}_{t-1}} \ge p_{i,t} \cdot \Pr(M_i(t) \condR  \Etheta_i(t), {\cal F}_{t-1}).$$

Since $\Etheta_i(t)$ is the event that $\theta_i(t) \le \xone$, therefore, given $\Etheta_i(t)$, $i(t)=i$ only if $\theta_1(t) <\xone$. Therefore,
\begin{eqnarray*}
\PrR{i(t)=i \condR   \Etheta_i(t),{\cal F}_{t-1}} & \le & \PrR{\theta_1(t)\le \xone, \theta_i(t) \ge \theta_j(t), \forall j\ne 1, \condR \Etheta_i(t),  {\cal F}_{t-1}} \\
& = & \PrR{\theta_1(t) \le \xone \condR  {\cal F}_{t-1}}\cdot \PrR{\theta_i(t) \ge \theta_j(t), \forall j\ne 1 \condR  \Etheta_i(t), {\cal F}_{t-1}}\\ 
& = & (1-p_{i,t}) \cdot \Pr(M_i(t) \condR  \Etheta_i(t), {\cal F}_{t-1}) \\
& \le &\frac{ (1-p_{i,t}) }{p_{i,t}} \PrR{i(t)=1\condR \Etheta_i(t), {\cal F}_{t-1}}.
\end{eqnarray*}

\end{proof}
}

\begin{proof}
Note that whether $\Emu_i(t)$ is true or not is determined by ${\cal F}_{t-1}$. Assume that filtration ${\cal F}_{t-1}$ is such that $\Emu_i(t)$ is true (otherwise the probability on the left hand side is $0$ and the inequality is trivially true).
It then suffices to prove that  
\begin{equation} \label{eqn:suff}
\PrR{i(t)=i  \condR \Etheta_i(t), {\cal F}_{t-1}} \le  \frac{(1-p_{i,t})}{p_{i,t}} \PrR{i(t)=1 \condR \Etheta_i(t), {\cal F}_{t-1}}. 
\end{equation}
Let $M_i(t)$ denote the event that arm $i$ exceeds all the suboptimal arms at time $t$. That is, 
$$M_i(t): \theta_{i}(t) \ge \theta_j(t), \forall j\ne 1.$$

We will prove the following two inequalities which immediately give \eqref{eqn:suff}.
\begin{eqnarray}
\PrR{i(t)=1 \condR \Etheta_i(t), {\cal F}_{t-1}} &\ge& p_{i,t} \cdot \PrR{M_i(t) \condR  \Etheta_i(t), {\cal F}_{t-1}}, \label{eqn:it1} \\
\PrR{i(t) = i \condR   \Etheta_i(t),{\cal F}_{t-1}} &\le&  (1-p_{i,t}) \cdot \PrR{M_i(t) \condR  \Etheta_i(t), {\cal F}_{t-1}}. \label{eqn:iti} 
\end{eqnarray}

We have 
\begin{equation} \label{eqn:M}
\PrR{i(t)=1 \condR \Etheta_i(t), {\cal F}_{t-1}} \ge \PrR{i(t)=1, M_i(t) \condR \Etheta_i(t), {\cal F}_{t-1}} = \PrR{M_i(t) \condR \Etheta_i(t), {\cal F}_{t-1}} \cdot \PrR{i(t)=1 \condR M_i(t), \Etheta_i(t), {\cal F}_{t-1}}.
\end{equation}
Now, given $M_i(t), \Etheta_i(t)$, it holds that
for all $j\ne i, j\ne 1$,
$$\theta_j(t) \le \theta_i(t) \le \xone,$$
and so
$$\Pr(i(t)=1 \condR M_i(t), \Etheta_i(t), {\cal F}_{t-1}) \ge \Pr(\theta_1(t) > \xone \condR M_i(t), \Etheta_i(t), {\cal F}_{t-1}) = \Pr(\theta_1(t) > \xone \condR {\cal F}_{t-1}) = p_{i,t} .$$
The second last equality follows because the events $M_i(t)$ and $\Etheta_i(t), \forall i\ne 1$ involve conditions on only $\theta_j(t)$ for $j\ne 1$, and given ${\cal F}_{t-1}$ (and hence $\hat{\mu}_j(t), k_j(t), \forall j$), $\theta_1(t)$ is independent of all the other $\theta_j(t), j\ne 1$, and hence independent of these events.
This together with \eqref{eqn:M} gives \eqref{eqn:it1}.

Since $\Etheta_i(t)$ is the event that $\theta_i(t) \le \xone$, therefore, given $\Etheta_i(t)$, $i(t)=i$ only if $\theta_1(t) <\xone$.
This gives \eqref{eqn:iti}:
\begin{eqnarray*}
\PrR{i(t)=i \condR   \Etheta_i(t),{\cal F}_{t-1}} & \le & \PrR{\theta_1(t)\le \xone, \theta_i(t) \ge \theta_j(t), \forall j\ne 1, \condR \Etheta_i(t),  {\cal F}_{t-1}} \\
& = & \PrR{\theta_1(t) \le \xone \condR  {\cal F}_{t-1}}\cdot \PrR{\theta_i(t) \ge \theta_j(t), \forall j\ne 1 \condR  \Etheta_i(t), {\cal F}_{t-1}}\\ 
& = & (1-p_{i,t}) \cdot \Pr(M_i(t) \condR  \Etheta_i(t), {\cal F}_{t-1}).
\end{eqnarray*}


\end{proof}

\begin{lemma}
\label{lem:E(t)}
$$\sum_{t=1}^T \PrR{i(t) = i, \overline{\Emu_i(t)}} \le \frac{1}{d(\xtwo,\mu_i)} + 1.$$
\end{lemma}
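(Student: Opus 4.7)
The plan is to reduce this to a Chernoff/KL-divergence calculation on the empirical mean of arm $i$'s i.i.d. Bernoulli rewards. Intuitively, $\overline{\Emu_i(t)}$ says $\hat\mu_i(t)>x_i$ for some $x_i>\mu_i$, and the empirical mean of an arm concentrates around its true mean as that arm is played more. The only times that contribute to the sum are times $t$ with $i(t)=i$, and at such $t$ the empirical mean $\hat\mu_i(t)$ is a deterministic function of the first $k_i(t)$ rewards pulled from arm $i$, which are i.i.d.\ Bernoulli($\mu_i$) and independent of the algorithm's choices.

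First I would reindex the sum by the number of plays of arm $i$. Let $X_1,X_2,\ldots$ denote the i.i.d.\ Bernoulli($\mu_i$) rewards that arm $i$ would yield on its successive plays, and let $\hat{Y}_k = \frac{1}{k}\sum_{j=1}^k X_j$ for $k\ge 1$, with $\hat{Y}_0 := 1$ (consistent with the convention $\hat\mu_i(t)=1$ when $k_i(t)=0$). Every time $t$ with $i(t)=i$ corresponds to a unique value of $k_i(t)\in\{0,1,\ldots,T-1\}$, and on that event $\hat\mu_i(t)=\hat{Y}_{k_i(t)}$. Hence
\begin{equation*}
\sum_{t=1}^T \mathbb{1}[i(t)=i,\ \overline{\Emu_i(t)}] \;\le\; \sum_{k=0}^{T-1} \mathbb{1}[\hat{Y}_k > x_i],
\end{equation*}
because the left side counts at most one $t$ per value $k$. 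Taking expectations:
\begin{equation*}
\sum_{t=1}^T \PrR{i(t)=i,\ \overline{\Emu_i(t)}} \;\le\; \sum_{k=0}^{T-1}\PrR{\hat{Y}_k > x_i}.
\end{equation*}

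Next I would bound each term. The $k=0$ term contributes at most $1$. For $k\ge 1$, since $x_i>\mu_i$, the standard Chernoff bound in KL form gives
\begin{equation*}
\PrR{\hat{Y}_k > x_i} \;\le\; e^{-k\, d(x_i,\mu_i)}.
\end{equation*}
Summing the geometric series and using the elementary inequality $e^{y}-1\ge y$ for $y>0$,
\begin{equation*}
\sum_{k=1}^{T-1} e^{-k\, d(x_i,\mu_i)} \;\le\; \frac{1}{e^{d(x_i,\mu_i)}-1} \;\le\; \frac{1}{d(x_i,\mu_i)}.
\end{equation*}
Combining with the $k=0$ term yields the claimed bound $\frac{1}{d(x_i,\mu_i)}+1$.

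There is no real obstacle; the only subtlety is the reindexing step, which requires noting that the rewards of arm $i$ are i.i.d.\ and independent of the algorithm's history, so the distribution of $\hat\mu_i(t)$ conditional on $k_i(t)=k$ is exactly that of $\hat{Y}_k$, and that each $k\in\{0,\ldots,T-1\}$ is witnessed at most once among the plays of arm $i$ in $T$ rounds. The rest is a routine Chernoff bound and geometric-sum estimate.
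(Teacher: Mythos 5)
Your proof is correct and follows essentially the same route as the paper's: reindex the sum by the number of plays $k$ of arm $i$ (using that each play of arm $i$ witnesses a distinct value of $k_i(t)$, i.e.\ the paper's decomposition over the intervals between successive plays), bound the $k=0$ term by $1$, apply the KL-form Chernoff--Hoeffding bound $\Pr(\hat{Y}_k > x_i)\le e^{-k\,d(x_i,\mu_i)}$ for $k\ge 1$, and sum the geometric series via $e^{d}-1\ge d$. No gaps; the reindexing/coupling with a pre-generated i.i.d.\ reward stream is exactly the justification the paper uses implicitly.
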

\begin{proof}
This essentially follows from application of Chernoff-Hoeffding bounds for concentration of $\hat{\mu}_i(t)$. Refer to Appendix \ref{app:E(t)} for details.
\end{proof}
\begin{lemma}
\label{lem:E1(t)}
$$\sum_{t=1}^T \PrR{i(t)=i, \overline{\Etheta_i(t)} , \Emu_i(t)} \le L_i(T) + 1.$$
\end{lemma}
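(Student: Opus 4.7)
My plan is a Chernoff--Hoeffding argument combined with a case split on the counter $k_i(t)$, using the Beta--Binomial CDF identity to convert a tail of the posterior $\theta_i(t)$ into a tail of a Binomial. Since, conditional on $\mathcal{F}_{t-1}$, $\theta_i(t) \sim \Beta(S_i(t)+1,\, k_i(t)-S_i(t)+1)$, the standard identity yields
$$\Pr(\theta_i(t) > y_i \mid \mathcal{F}_{t-1}) \;=\; F^B_{k_i(t)+1,\, y_i}(S_i(t)).$$

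On the $\mathcal{F}_{t-1}$-measurable event $\Emu_i(t)$ we have $S_i(t) \le x_i k_i(t) < y_i(k_i(t)+1)$, so the KL form of the Chernoff--Hoeffding bound applied to $\mathrm{Binom}(k_i(t)+1, y_i)$, combined with the monotonicity of $d(\cdot, y_i)$ on $[0, y_i]$, gives
$$F^B_{k_i(t)+1,\, y_i}(S_i(t)) \;\le\; \exp\!\big(-(k_i(t)+1)\, d(x_i, y_i)\big).$$
Treating $\mathbf{1}\{\Emu_i(t)\}$ as determined by $\mathcal{F}_{t-1}$, this yields
$$\Pr\!\big(i(t)=i,\, \overline{\Etheta_i(t)},\, \Emu_i(t)\big) \;\le\; \Ex\!\Big[\mathbf{1}\{i(t)=i\}\cdot \exp\!\big(-(k_i(t)+1)\, d(x_i, y_i)\big)\Big].$$

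Summing over $t \in [1,T]$, I use that $k_i(t)$ takes each non-negative integer value at most once among the steps with $i(t)=i$, so the right-hand side is deterministically bounded by $\sum_{k=0}^{T-1} \exp(-(k+1)\, d(x_i, y_i))$. I split this sum at the threshold $(k+1)\, d(x_i, y_i) = \ln T$, i.e.\ $k+1 = L_i(T)$: for the indices below the threshold (at most $L_i(T)$ of them) I bound each summand by $1$, contributing at most $L_i(T)$; for the indices at or above the threshold (at most $T$ of them) each summand is at most $e^{-\ln T} = 1/T$, contributing at most $1$. Adding the two regimes gives the claimed $L_i(T) + 1$.

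The only finicky point is the Chernoff bookkeeping around the ``$+1$'' in the Binomial parameter and the edge case $S_i(t) = 0$, both cleanly handled by the fact that $d(q, y_i)$ is decreasing in $q$ on $[0, y_i]$, so $(k_i(t)+1)\, d\!\left(S_i(t)/(k_i(t)+1),\, y_i\right) \ge (k_i(t)+1)\, d(x_i, y_i)$. There is no deeper obstacle; the whole proof is a tail bound followed by a geometric truncation.
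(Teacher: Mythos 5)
Your reduction of the posterior tail to a Binomial tail via Fact~\ref{fact:beta-binomial} and the KL--Chernoff estimate $F^B_{k_i(t)+1,\xone}(S_i(t)) \le e^{-(k_i(t)+1)\,d(\xtwo,\xone)}$ on the event $\Emu_i(t)$ are correct and match the paper's opening steps. The gap is the very next line: the inequality
$\PrR{i(t)=i,\ \overline{\Etheta_i(t)},\ \Emu_i(t)} \le \ExR{I(i(t)=i)\, e^{-(k_i(t)+1)\,d(\xtwo,\xone)}}$
does not follow from conditioning on ${\cal F}_{t-1}$, and it is false in general. The indicator $I(i(t)=i)$ is not ${\cal F}_{t-1}$-measurable, and given ${\cal F}_{t-1}$ the events $\{i(t)=i\}$ and $\overline{\Etheta_i(t)}=\{\theta_i(t)>\xone\}$ are strongly \emph{positively} dependent, since both are driven by the same draw $\theta_i(t)$. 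Conditioning legitimately yields either $\ExR{I(\Emu_i(t))\, e^{-(k_i(t)+1)d(\xtwo,\xone)}}$ (dropping the play indicator) or $\PrR{i(t)=i,\ \Emu_i(t)}$ (dropping the tail), but not their product. Concretely, once arm $1$'s posterior is concentrated above $\xone$, arm $i$ is played essentially only when $\theta_i(t)>\xone$, so $\PrR{i(t)=i,\ \overline{\Etheta_i(t)} \condR {\cal F}_{t-1}}$ is close to $\PrR{i(t)=i \condR {\cal F}_{t-1}}$, which exceeds $\PrR{i(t)=i \condR {\cal F}_{t-1}}\, e^{-(k_i(t)+1)d(\xtwo,\xone)}$.

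The overreach is also visible quantitatively: if your per-step bound were true, your own summation would give $\sum_{k\ge 0} e^{-(k+1)d(\xtwo,\xone)} \le 1/(e^{d(\xtwo,\xone)}-1)$, a bound independent of $T$ (there would be no reason to truncate at $L_i(T)$), and feeding that into the regret decomposition would make $\Ex[k_i(T)]$ constant in $T$, contradicting the Lai--Robbins $\Omega(\ln T)$ lower bound --- this lemma's $L_i(T)$ term is precisely where the $\ln T$ must enter. The repair is the paper's route: split time according to whether $k_i(t)$ exceeds $L_i(T)$. For steps with $k_i(t)\le L_i(T)$ use only the play indicator --- pathwise there are at most about $L_i(T)$ such steps with $i(t)=i$, since each play increments $k_i$; for steps with $k_i(t)> L_i(T)$ use only the conditional tail bound, which is then at most $e^{-L_i(T)\,d(\xtwo,\xone)}=1/T$ per step and hence at most $1$ in total. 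Your tail estimate is exactly what the second regime needs; you just cannot simultaneously keep the indicator there, nor the exponential factor in the first regime.
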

\begin{proof}
This essentially follows from the observation that the beta-distributed random variable $\theta_i(t)$ is well-concentrated around its mean when $k_i(t)$ is large, that is, larger than $L_i(T)$. Refer to Appendix \ref{app:E1(t)} for details.
\end{proof}
\begin{lemma}
\label{lem:pBound}
Let $\tau_j$ denote the time step at which $j^{th}$ trial of first arm happens, then
$$\Ex[\frac{1}{p_{i,\tau_j+1}}] \le \left\{ \begin{array}{lcl}
1+\frac{3}{\Delta'_i}, & j< \frac{8}{\Delta'_i}, \\
1+ \Theta(e^{-\Delta'^2_ij/2} + \frac{1}{(j+1)\Delta'^2_i} e^{-D_ij} + \frac{1}{e^{\Delta'^2_ij/4}-1}), & j \ge \frac{8}{\Delta'_i},
\end{array}\right.$$
where $\Delta'_i=\mu_1-\xone$, $D_i=\xone\log \frac{\xone}{\mu_1} + (1-\xone) \log\frac{1-\xone}{1-\mu_1}$.
\end{lemma}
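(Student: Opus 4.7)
}

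The first step is to exploit the beta/binomial duality. At time $\tau_j+1$ the optimal arm has been played exactly $j$ times, so $\theta_1(\tau_j+1)\sim \Beta(s+1,\, j-s+1)$ where $s:=S_1(\tau_j+1)$. Because the rewards of arm $1$ on its plays are i.i.d.\ $\text{Bernoulli}(\mu_1)$, $s\sim \mathrm{Binomial}(j,\mu_1)$. Applying the standard identity $\Pr(\Beta(\alpha,\beta)>y)=F^B_{\alpha+\beta-1,\,y}(\alpha-1)$ with $\alpha=s+1$, $\beta=j-s+1$ gives $p_{i,\tau_j+1}=F^B_{j+1,\,y_i}(s)$, so
\[
\Ex\!\left[\tfrac{1}{p_{i,\tau_j+1}}\right]\;=\;\sum_{s=0}^{j}\binom{j}{s}\mu_1^s(1-\mu_1)^{j-s}\cdot\frac{1}{F^B_{j+1,\,y_i}(s)}.
\]
Every subsequent step is an estimate on this finite sum via binomial tail and pmf estimates.

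For the regime $j<8/\Delta'_i$ I would use a crude split at $s^\star=\lceil y_i(j+1)\rceil$. For $s\ge s^\star$, standard binomial bounds imply $F^B_{j+1,y_i}(s)\ge \tfrac12$, so those terms contribute at most $2$. For $s<s^\star$, I would bound $1/F^B_{j+1,y_i}(s)$ by $1/f^B_{j+1,y_i}(s^\star-1)$ and use a Chernoff estimate on $\Pr(s<s^\star)$ under $\mathrm{Binomial}(j,\mu_1)$, which produces a term scaling like $1/\Delta'_i$ (essentially because $\Pr(s<y_i j)$ is trivially controlled when $j\Delta'_i=O(1)$). Tuning constants yields the advertised $1+3/\Delta'_i$.

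For the main regime $j\ge 8/\Delta'_i$ I would partition the range of $s$ into three blocks. \emph{(i) Typical block} $s\ge y_i(j+1)$: by one-sided Chernoff on $\mathrm{Binomial}(j+1,y_i)$ one has $F^B_{j+1,y_i}(s)\ge \tfrac12$, and by Hoeffding's inequality $\Pr(s<y_i(j+1))\le e^{-\Delta'^2_i j/2}$, so this block contributes at most $1+2e^{-\Delta'^2_i j/2}$, matching the first exponential term. \emph{(ii) Intermediate block} $y_i j \le s<\mu_1 j-\delta$ for a suitable $\delta$: here I would lower-bound $F^B_{j+1,y_i}(s)$ by its single largest summand, combining a Stirling/local-density estimate (yielding the $1/((j+1)\Delta'^2_i)$ factor) with a KL/Sanov estimate (yielding $e^{-D_i j}$); summing the total $\mathrm{Binomial}(j,\mu_1)$ weight of this block, which is at most $1$, gives the middle term in the bound. \emph{(iii) Small-$s$ block}: I would partition $\{s<y_i j\}$ into geometrically spaced sub-intervals of width $\delta$ (so that moving by one sub-interval multiplies $1/F^B_{j+1,y_i}(s)$ by a fixed factor while the binomial weight under $\mathrm{Binomial}(j,\mu_1)$ shrinks by a fixed factor, via the ratio of consecutive binomial pmf values). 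Pairing Chernoff-type tail bounds on the weight of each sub-interval with the reciprocal-pmf bound on $1/F^B_{j+1,y_i}(\cdot)$ collapses the contribution of this block to a geometric series with ratio $e^{-\Delta'^2_i/4}$, producing the final term $1/(e^{\Delta'^2_i j/4}-1)$.

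The main obstacle is the intermediate block: both the Stirling estimate on the binomial peak density (to extract the $1/((j+1)\Delta'^2_i)$ factor) and the KL-based tail bound (to extract $e^{-D_i j}$) need to be combined without losing constants, since the naive Chernoff bound $e^{-2\Delta'^2_i j}$ used elsewhere is much weaker than $e^{-D_i j}$ and would not match \cite{LaiR}. Secondarily, the geometric bucketing in the small-$s$ block requires the bucket width $\delta$ to be chosen so that both the pmf ratio and the Chernoff ratio move in step, which is where the exponent $\Delta'^2_i/4$ rather than $\Delta'^2_i/2$ naturally arises.
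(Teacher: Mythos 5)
Your opening reduction is exactly the paper's: via the beta--binomial identity, $p_{i,\tau_j+1}=F^B_{j+1,y_i}(s)$ with $s\sim\mathrm{Binomial}(j,\mu_1)$, so the task is to bound $\sum_{s=0}^{j} f^B_{j,\mu_1}(s)/F^B_{j+1,y_i}(s)$. But the execution you sketch breaks down precisely where the lemma's difficulty lies, namely for $s$ below $y_i j$, where both numerator and denominator are exponentially small and only a \emph{matched} ratio estimate at the same $s$ works. In your small-$j$ regime, the step ``bound $1/F^B_{j+1,y_i}(s)$ by $1/f^B_{j+1,y_i}(s^\star-1)$ for all $s<s^\star$'' is false: $F^B_{j+1,y_i}(s)$ is increasing in $s$ and at $s=0$ equals $(1-y_i)^{j+1}$, which can be exponentially smaller than the near-mean pmf $f^B_{j+1,y_i}(s^\star-1)=\Theta(1/\sqrt{j})$; multiplying by $\Pr_{\mu_1}(s<s^\star)$ does not rescue this. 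The paper instead pairs $f^B_{j,\mu_1}(s)$ with $f^B_{j,y_i}(s)$ at the same $s$, so the ratio is $R^s\bigl((1-\mu_1)/(1-y_i)\bigr)^j$ with $R=\frac{\mu_1(1-y_i)}{y_i(1-\mu_1)}$, and the geometric sum (ratio $R\approx 1+\Theta(\Delta'_i)$, peak value $e^{-D_i j}$) gives the $3/\Delta'_i$ bound. The same issue invalidates your block (iii) for large $j$: the per-step decay in the region $s<y_i j$ is governed by $R$, a first-order-in-$\Delta'_i$ effect, not $e^{-\Delta'^2_i/4}$, and Chernoff/Hoeffding bounds on the $\mathrm{Binomial}(j,\mu_1)$ weight of a bucket cannot beat the exponentially small cdf in the denominator.

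Your large-$j$ decomposition also misattributes the three terms in the bound. In the paper, the term $\frac{1}{(j+1)\Delta'^2_i}e^{-D_i j}$ comes from $s\le \lfloor y_i j\rfloor-1$, and extracting the crucial extra factor $\frac{1}{(j+1)\Delta'^2_i}$ (beyond the naive $e^{-D_i j}/\Delta'_i$ that pmf-matching alone gives) requires a sharp lower bound on the binomial cdf in the large-deviation regime, namely Jer\'abek's estimate $F^B_{j+1,y}(s)=\Theta\bigl(\tfrac{y(j+1-s)}{y(j+1)-s}\binom{j+1}{s}y^s(1-y)^{j+1-s}\bigr)$; without that refinement the final problem-independent bound degrades to $1/\Delta_i^3$ per arm and the $O(\sqrt{NT\ln T})$ theorem is lost. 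Your intermediate block $y_i j\le s<\mu_1 j-\delta$ has $F^B_{j+1,y_i}(s)=\Theta(1)$, so no $e^{-D_i j}$ can arise there; its correct contribution is $\Theta(e^{-\Delta'^2_i j/2})$ via the Hoeffding bound on its $\mathrm{Binomial}(j,\mu_1)$ weight. The term $\frac{1}{e^{\Delta'^2_i j/4}-1}$ comes from the top region $s\ge \mu_1 j-\frac{\Delta'_i}{2}j$, where $F^B_{j+1,y_i}(s)\ge 1-e^{-\Delta'^2_i j/4}$, not from bucketing small $s$. Finally, your block (i) as stated does not yield $1+2e^{-\Delta'^2_i j/2}$: knowing only $F\ge 1/2$ on $\{s\ge y_i(j+1)\}$ bounds that block by $2$; getting $1+o(1)$ requires splitting off the region where $1-F$ is itself exponentially small, which is exactly the paper's fourth partial sum. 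So the overall strategy (duality plus range-splitting) is right, but the specific estimates you propose would not establish the stated bound.
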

\begin{proof}
The proof of this inequality involves some careful algebraic manipulations using tight estimates for partial Binomial sums provided by \cite{jerabek}. Refer to Appendix \ref{app:pBound} for details.
\end{proof}
\paragraph{Proof of Theorem \ref{th:problem-dependent} and Theorem \ref{th:problem-independent}}
Let $\tau_k$ denote the time step at which arm $1$ is played for the $k^{th}$ time for $k\geq 1$, and let $\tau_0=0$.
Using the above lemmas,
\begin{eqnarray}
\Ex[k_i(T)] & = & \sum_{t=1}^T \Pr(i(t)=i) \nonumber\\
& = & \sum_{t=1}^T \Pr(i(t)=i, \Emu_i(t), \Etheta_i(t)) + \sum_{t=1}^T \PrR{i(t)=i, \Emu_i(t), \overline{\Etheta_i(t)}} + \sum_{t=1}^T \PrR{i(t) = i, \overline{\Emu_i(t)}} \nonumber\\
& \le & \sum_{t=1}^T \ExR{\frac{(1-p_{i,t})}{p_{i,t}} I(i(t)=1, \Etheta_i(t), \Emu_i(t))} + L_i(T) + 1 + \frac{1}{d(\xtwo,\mu_i)}+1\nonumber\\
(*) & \le & \sum_{k=0}^{T-1}  \ExR{ \frac{(1-p_{i,\tau_{k}+1})}{p_{i,\tau_{k}+1}} \sum_{t=\tau_{k}+1}^{\tau_{k+1}} I(i(t)=1)} + L_i(T) + 1 + \frac{1}{d(\xtwo,\mu_i)}+1\nonumber\\
 & = & \sum_{k=0}^{T-1}  \ExR{ \frac{1}{p_{i,\tau_{k}+1}}-1} + L_i(T) + 1 + \frac{1}{d(\xtwo,\mu_i)}+1\nonumber\\
& \le & \frac{24}{\Delta'^2_i}+ \sum_{j=0}^{T-1} \Theta\left(e^{-\Delta'^2_ij/2} + \frac{1}{(j+1)\Delta'^2_i} e^{-D_ij} + \frac{1}{e^{\Delta'^2_ij/4}-1}\right)+ L_i(T) + 1 + \frac{1}{d(\xtwo,\mu_i)}+1.\nonumber\\
\end{eqnarray}
The inequality marked $(*)$ uses the observation that $p_{i,t}=\Pr(\theta_1(t) > \xone | {\cal F}_{t-1})$ changes only when the distribution of $\theta_1(t)$ changes, that is, only on the time step after each play of first arm. Thus, $p_{i,t}$ is same at all time steps $t\in \{\tau_k+1, \ldots, \tau_{k+1}\}$, for every $k$.\newline\\

For {\bf logarithmic  problem-dependent bound of Theorem \ref{th:problem-dependent}}, for some $0<\epsilon\le1$, we set $\xtwo \in (\mu_i, \mu_1)$ such that $d(\xtwo,\mu_1)=d(\mu_i,\mu_1)/(1+\epsilon)$,
and set $\xone \in (\xtwo, \mu_1)$ such that 
$d(\xtwo, \xone)= d(\xtwo,\mu_1)/(1+\epsilon) = d(\mu_i,\mu_1)/(1+\epsilon)^2$ \ (\footnote{This way of choosing thresholds, in order to obtain bounds in terms of KL-divergences $d(\mu_i,\mu_1)$ rather than $\Delta_i$s, is inspired by \cite{Garivier11, Maillard11, Kaufmann12}.}). This gives 
$$L_i(T) = \frac{\ln T}{d(\xtwo, \xone)} =(1+\epsilon)^2 \frac{\ln T}{d(\mu_i,\mu_1)}.$$
Also, by some simple algebraic manipulations of the equality $d(\xtwo,\mu_1)=d(\mu_i,\mu_1)/(1+\epsilon)$, we can obtain
$$\xtwo-\mu_i \ge \frac{\epsilon}{(1+\epsilon)} \cdot \frac{d(\mu_i, \mu_1)}{\ln\left(\frac{\mu_1(1-\mu_i)}{\mu_i(1-\mu_1)}\right)}, $$
giving $$\frac{1}{d(\xtwo,\mu_i)} \le \frac{2}{(\xtwo-\mu_i)^2} = O(\frac{1}{\epsilon^2}).$$
Here order notation is hiding functions of $\mu_i$s and $\Delta_i$s, since they are assumed to be constants.
$$\sum_{j=0}^{T-1} \Theta(e^{-\Delta'^2_ij/2} + \frac{1}{(j+1)\Delta'^2_i} e^{-D_ij} + \frac{1}{e^{\Delta'^2_ij/4}-1}) \le \Theta\left(\frac{1}{\Delta'^2_i} + \frac{1}{\Delta'^2_i D} + \frac{1}{\Delta'^4_i} + \frac{1}{\Delta'^2_i}\right) = \Theta(1). $$
Combining, we get
\begin{eqnarray*}
\Ex[{\cal R}(T)] = \sum_i \Delta_i\Ex[k_i(T)] 
& = & \sum_i (1+\epsilon)^2 \frac{\ln T}{d(\mu_i, \mu_1)} \Delta_i + O(\frac{N}{\epsilon^2}) \le \sum_i (1+\epsilon')\frac{\ln T}{d(\mu_i, \mu_1)} \Delta_i + O(\frac{N}{\epsilon'^2}),
\end{eqnarray*}
where $\epsilon'=3\epsilon$, and the order notation in above hides $\mu_i$s and $\Delta_i$s in addition to the absolute constants.\newline\\

For obtaining $O(\sqrt{NT \ln T})$ {\bf problem-independent bound of Theorem \ref{th:problem-independent}}, we pick $\xtwo=\mu_i+\frac{\Delta_i}{3}, \xone=\mu_1-\frac{\Delta_i}{3}$, so that $\Delta'^2=(\mu_1-\xone)^2=\frac{\Delta_i^2}{9}$, and using Pinsker's inequality,
$d(\xtwo,\mu_i) \ge \frac{1}{2}(\xtwo-\mu_i)^2 = \frac{\Delta_i^2}{18}$, 
$d(\xtwo,\xone) \ge \frac{1}{2}(\xone-\xtwo)^2 \ge \frac{\Delta_i^2}{18}$. Then,
$$L_i(T)=\frac{\ln T}{d(\xtwo,\xone)} \le \frac{18\ln T}{\Delta_i^2}.$$
$$\frac{1}{d(x_i,\mu_i)} \le \frac{18}{\Delta_i^2}.$$
\begin{eqnarray*}
\sum_{j=0}^{T-1} \Theta\left(e^{-\Delta'^2_ij/2} + \frac{1}{(j+1)\Delta'^2_i} e^{-D_ij} + \frac{1}{e^{\Delta'^2_ij/4}-1}\right) & \le & \sum_{j=0}^{T-1} \Theta\left(e^{-\Delta'^2_ij/2} + \frac{1}{(j+1)\Delta'^2_i} + \frac{4}{\Delta'^2_i j}\right)\\
& = &  \Theta\left(\frac{1}{\Delta'^2_i} + \frac{\ln T}{\Delta'^2_i}\right)\\
& = &  \Theta\left(\frac{\ln T}{\Delta^2_i}\right).
\end{eqnarray*}
This gives,
$$\Ex[k_i(T)]=O\left(\frac{\ln T}{\Delta_i^2}\right).$$
\begin{eqnarray*}
\Ex[R(T)] = \sum_i \Delta_i\Ex[k_i(T)] & = & O\left(\sum_{i\ne 1} \frac{\ln T}{\Delta_i}\right) \\
\end{eqnarray*}
We observe that in the worst case, for all suboptimal $i$, $\Delta_i \ge \sqrt{\frac{N\ln T}{T}}$. This is because the total regret on playing arms with $\Delta_i < \sqrt{\frac{N\ln T}{T}}$ instead of the optimal arm is at most $\sqrt{NT \ln T}$. Thus, all the arms with $\Delta_i < \sqrt{\frac{N\ln T}{T}}$ can be assumed to be optimal arms. Also, in \cite{AgrawalG12} we proved that multiple optimal arms can only help. 

Therefore, substituting $\Delta_i=\sqrt{\frac{N\ln T}{T}}$,
$$\Ex[{\cal R}(T)] = O(\sqrt{NT \ln T})$$
\qed

\paragraph{Acknowledgement.} We thank Emil Je\v r\'abek  for telling us about his estimates of partial binomial sums. 
We also thank MathOverflow for connecting us with Emil. 
\bibliography{bibliography_contextual}
\bibliographystyle{abbrv}
\appendix

\section{Some results used in the proofs}

\begin{fact}[Chernoff-Hoeffding bound]
\label{fact:CHbound1}
Let $X_1, \ldots, X_n$ be independent $0-1$ r.v.s with $E[X_i] = p_i$ (not necessarily equal). Let $X = \frac{1}{n}\sum_i=X_i$, $\mu = E[X] = \frac{1}{n}\sum_{i=1}^n p_i$.
Then, for any $0<\lambda<1-\mu$,
$$\Pr(X\ge\mu+\lambda) \le \exp\{-nd(\mu+\lambda,\mu)\}, $$
and, for any $0<\lambda<\mu$,
$$\Pr(X\le \mu-\lambda) \le \exp\{-nd(\mu-\lambda,\mu)\}, $$
where $d(a,b) = a\ln\frac{a}{b} +(1-a) \ln \frac{(1-a)}{(1-b)}$.
\end{fact}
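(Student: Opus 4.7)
The plan is to use the standard Chernoff method: apply Markov's inequality to the exponential moment generating function of $S=\sum_i X_i$, use independence to factorize, apply AM-GM to reduce to the i.i.d. case, and finally optimize over the tuning parameter so that the KL divergence $d(\mu+\lambda,\mu)$ appears in the exponent. For the lower tail I would invoke symmetry by passing to $1-X_i$.

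First I would fix $s>0$ and write, by Markov's inequality,
$$\Pr(X\ge \mu+\lambda) \;=\; \Pr(e^{sS}\ge e^{sn(\mu+\lambda)}) \;\le\; e^{-sn(\mu+\lambda)}\,\Ex[e^{sS}].$$
Independence gives $\Ex[e^{sS}]=\prod_{i=1}^n(1-p_i+p_ie^s)$. Since $e^s>0$, each factor is positive, and AM-GM gives
$$\prod_{i=1}^n (1-p_i+p_ie^s) \;\le\; \left(\tfrac{1}{n}\sum_{i=1}^n(1-p_i+p_ie^s)\right)^{\!n} \;=\; (1-\mu+\mu e^s)^n,$$
which reduces the heterogeneous case to the homogeneous Bernoulli$(\mu)$ one. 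So the upper-tail bound becomes $\bigl(e^{-s(\mu+\lambda)}(1-\mu+\mu e^s)\bigr)^n$.

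Next I would optimize over $s>0$. Differentiating the logarithm and setting the derivative to zero yields
$$e^{s^\star} \;=\; \frac{(\mu+\lambda)(1-\mu)}{\mu(1-\mu-\lambda)},$$
which is strictly greater than $1$ whenever $0<\lambda<1-\mu$, so $s^\star>0$ is admissible. Plugging $s^\star$ back in and simplifying (replace $\mu e^{s^\star}$ inside the parenthesis, then collect logs) one gets $-s^\star(\mu+\lambda)+\ln(1-\mu+\mu e^{s^\star})=-d(\mu+\lambda,\mu)$, giving exactly
$$\Pr(X\ge \mu+\lambda) \;\le\; \exp\{-n\,d(\mu+\lambda,\mu)\}.$$

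For the lower tail, let $Y_i:=1-X_i$, so the $Y_i$ are independent Bernoullis with $\Ex[Y_i]=1-p_i$, mean $1-\mu$, and $X\le\mu-\lambda$ iff $\frac{1}{n}\sum Y_i\ge (1-\mu)+\lambda$. Applying the upper-tail bound just proved to $\{Y_i\}$ (valid since $0<\lambda<\mu=1-(1-\mu)$) gives the exponent $-n\,d(1-\mu+\lambda,1-\mu)$, and the identity $d(1-a,1-b)=d(a,b)$ (read off the formula by swapping the two logarithmic terms) converts this into $-n\,d(\mu-\lambda,\mu)$, as required.

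The only non-trivial step is the algebraic simplification that the optimized Chernoff exponent equals $-d(\mu+\lambda,\mu)$; the remaining steps (Markov, independence, AM-GM, symmetry $d(1-a,1-b)=d(a,b)$) are routine. I would carry the simplification out carefully and leave the other steps brief.
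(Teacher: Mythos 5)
Your proof is correct. The paper states Fact~\ref{fact:CHbound1} without proof, as a classical result (it is Hoeffding's original 1963 theorem), so there is no in-paper argument to compare against; your route --- Markov's inequality on $e^{sS}$, factorization by independence, the AM--GM reduction $\prod_i(1-p_i+p_ie^s)\le(1-\mu+\mu e^s)^n$ to the homogeneous case, and optimization at $e^{s^\star}=\frac{(\mu+\lambda)(1-\mu)}{\mu(1-\mu-\lambda)}$ (which indeed yields $1-\mu+\mu e^{s^\star}=\frac{1-\mu}{1-\mu-\lambda}$ and hence the exponent $-d(\mu+\lambda,\mu)$) --- is exactly the standard proof, and the lower tail via $Y_i=1-X_i$ together with the identity $d(1-a,1-b)=d(a,b)$ is handled correctly, including the matching range condition $0<\lambda<\mu$.
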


\begin{fact}[Chernoff--Hoeffding bound]
\label{fact:CHbound2}
Let $X_1, . . . , X_n$ be random variables with common range $[0, 1]$ and such that $\ExR{X_t \condR X_1, . . . , X_{t-1}} = \mu$. Let $S_n = X_1 + \ldots + X_n$. Then for
all $a\ge 0$,
$$\Pr(S_n \ge n\mu + a) \le e^{-2a^2/n},$$
$$\Pr(S_n \le n\mu - a) \le e^{-2a^2/n}.$$
\end{fact}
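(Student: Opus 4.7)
The plan is to prove this by the standard exponential Markov (Chernoff) method applied to the martingale difference sequence $Y_t := X_t - \mu$. By hypothesis, $\ExR{Y_t \condR X_1,\ldots,X_{t-1}} = 0$ and $Y_t \in [-\mu, 1-\mu]$ almost surely, so each $Y_t$ lies in an interval of width $1$. For any $\lambda > 0$, Markov's inequality applied to the nonnegative random variable $e^{\lambda(S_n - n\mu)}$ gives
\begin{equation*}
\PrR{S_n \ge n\mu + a} = \PrR{e^{\lambda(S_n - n\mu)} \ge e^{\lambda a}} \le e^{-\lambda a}\,\ExR{e^{\lambda \sum_{t=1}^n Y_t}}.
\end{equation*}

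The next step is to control the moment generating function by iterating conditional expectations. The key ingredient is Hoeffding's lemma: if $Y$ is a random variable with $\Ex[Y] = 0$ and $Y \in [c,d]$ almost surely, then $\Ex[e^{\lambda Y}] \le e^{\lambda^2 (d-c)^2/8}$. I would prove this by writing $Y = \alpha d + (1-\alpha) c$ with $\alpha = (Y-c)/(d-c) \in [0,1]$, using convexity of $x \mapsto e^{\lambda x}$ to bound $e^{\lambda Y} \le \alpha e^{\lambda d} + (1-\alpha) e^{\lambda c}$, taking expectations (using $\Ex[Y]=0$ to pin down $\Ex[\alpha]$), and then showing via a Taylor expansion of the resulting log that the expression is at most $\lambda^2(d-c)^2/8$. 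Applied conditionally on ${\cal F}_{t-1} := \sigma(X_1,\ldots,X_{t-1})$, and using the fact that $Y_t$ has conditional mean $0$ and lies in an interval of width $1$, this yields $\ExR{e^{\lambda Y_t} \condR {\cal F}_{t-1}} \le e^{\lambda^2/8}$.

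Now I would peel off the variables one at a time via the tower property:
\begin{equation*}
\ExR{e^{\lambda \sum_{t=1}^n Y_t}} = \ExR{e^{\lambda \sum_{t=1}^{n-1} Y_t} \cdot \ExR{e^{\lambda Y_n}\condR {\cal F}_{n-1}}} \le e^{\lambda^2/8}\, \ExR{e^{\lambda \sum_{t=1}^{n-1} Y_t}},
\end{equation*}
so by induction $\ExR{e^{\lambda \sum_{t=1}^n Y_t}} \le e^{n\lambda^2/8}$. Substituting back gives $\PrR{S_n \ge n\mu + a} \le e^{-\lambda a + n\lambda^2/8}$, and minimizing over $\lambda > 0$ at the choice $\lambda = 4a/n$ yields the upper tail bound $e^{-2a^2/n}$. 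The lower tail follows by applying the identical argument to the variables $X'_t := 1 - X_t$, which again take values in $[0,1]$ and satisfy $\ExR{X'_t \condR X_1,\ldots,X_{t-1}} = 1 - \mu$, so $\PrR{S_n \le n\mu - a} = \PrR{\sum_t X'_t \ge n(1-\mu) + a} \le e^{-2a^2/n}$.

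The main obstacle is Hoeffding's lemma itself: the clean factor of $(d-c)^2/8$ requires a careful estimate of the log moment generating function of a zero-mean bounded random variable rather than a naive bound. Everything else (Markov, tower property, optimizing $\lambda$) is routine once the one-step bound $\ExR{e^{\lambda Y_t}\condR {\cal F}_{t-1}} \le e^{\lambda^2/8}$ is established.
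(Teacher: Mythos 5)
Your proof is correct, and there is nothing in the paper to compare it against: the paper states this bound as a known \emph{Fact} in its appendix without proof (it is the standard Azuma--Hoeffding inequality for bounded martingale difference sequences, as used e.g.\ in the UCB literature). Your argument is the canonical one --- exponential Markov, the conditional Hoeffding lemma $\ExR{e^{\lambda Y_t} \condR {\cal F}_{t-1}} \le e^{\lambda^2/8}$ for the zero-conditional-mean differences $Y_t = X_t - \mu \in [-\mu, 1-\mu]$, peeling via the tower property to get $\ExR{e^{\lambda(S_n - n\mu)}} \le e^{n\lambda^2/8}$, and optimizing at $\lambda = 4a/n$ to obtain $e^{-2a^2/n}$. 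You correctly identify the one point where care is needed: the hypothesis gives only a conditional mean given the past, not independence, so the plain i.i.d.\ Chernoff argument does not apply and the one-step conditional MGF bound plus induction is exactly what is required; the reduction of the lower tail via $X'_t = 1 - X_t$ (equivalently, running the argument with $-\lambda$) is also sound, since conditioning on $X'_1,\ldots,X'_{t-1}$ generates the same $\sigma$-algebra. The only ingredient you defer is the proof of Hoeffding's lemma itself, and your sketch of it (convexity to interpolate $e^{\lambda Y}$ between the endpoint values, then a Taylor bound on the logarithm of the resulting expression, whose second derivative is at most $(d-c)^2/4$) is the standard complete route.
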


\begin{fact}
\label{fact:beta-binomial}
$$ F^{beta}_{\alpha, \beta}(y) = 1-F^B_{\alpha+\beta-1,y}(\alpha-1),$$
for all positive integers $\alpha, \beta$.
\end{fact}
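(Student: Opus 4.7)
The plan is to prove this classical identity via a short order-statistics argument, which is considerably cleaner than the alternative route through integration by parts on the Beta CDF. The key idea is to interpret both sides as probabilities of the same event defined in terms of i.i.d.\ uniform random variables.

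First, I would set $n = \alpha + \beta - 1$ and let $U_1, \ldots, U_n$ be i.i.d.\ uniform on $[0,1]$, with order statistics $U_{(1)} \le \cdots \le U_{(n)}$. The standard density of the $k$-th order statistic of $n$ uniforms,
\[
f_{U_{(k)}}(x) \;=\; \frac{n!}{(k-1)!\,(n-k)!}\, x^{k-1}(1-x)^{n-k},
\]
applied with $k=\alpha$ and $n-k+1 = \beta$ shows that $U_{(\alpha)}$ is distributed as $\text{Beta}(\alpha,\beta)$. Hence $F^{beta}_{\alpha,\beta}(y) = \Pr(U_{(\alpha)} \le y)$.

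Next, I would use the elementary equivalence
\[
\{\, U_{(\alpha)} \le y \,\} \;=\; \bigl\{\, |\{i : U_i \le y\}| \ge \alpha \,\bigr\},
\]
i.e.\ the $\alpha$-th smallest of the $U_i$ is at most $y$ iff at least $\alpha$ of them fall in $[0,y]$. Since the $U_i$ are independent and each lies in $[0,y]$ with probability $y$, the count $|\{i : U_i \le y\}|$ has a Binomial$(n,y)$ distribution. Therefore
\[
F^{beta}_{\alpha,\beta}(y) \;=\; \Pr(U_{(\alpha)} \le y) \;=\; \Pr\bigl(\text{Bin}(n,y) \ge \alpha\bigr) \;=\; 1 - F^B_{\alpha+\beta-1,\,y}(\alpha-1),
\]
which is the claimed identity. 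The integrality of $\alpha$ and $\beta$ is used exactly where it is needed, namely to make sense of $n = \alpha+\beta-1$ uniform samples and their $\alpha$-th order statistic.

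I do not anticipate any real obstacle: both building blocks (the Beta form of the uniform order-statistic density, and the equivalence of order-statistic events with binomial count events) are elementary. The only thing worth double-checking is the off-by-one in the binomial index, which is handled correctly because $\{\text{Bin}(n,y) \ge \alpha\}$ is the complement of $\{\text{Bin}(n,y) \le \alpha-1\}$, matching the $\alpha-1$ argument in $F^B$ on the right-hand side of the statement.
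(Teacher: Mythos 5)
Your proposal is correct, and it is worth noting that the paper itself offers no proof of Fact~\ref{fact:beta-binomial}: it is stated bare in the appendix as a classical identity, so there is no ``paper's route'' to match against. Your order-statistics argument is one of the two standard derivations and is fully rigorous as written: with $n=\alpha+\beta-1$ i.i.d.\ uniforms, the $\alpha$-th order statistic $U_{(\alpha)}$ has density $\frac{n!}{(\alpha-1)!\,(n-\alpha)!}x^{\alpha-1}(1-x)^{n-\alpha}$, which is exactly $\Beta(\alpha,\beta)$ since $n-\alpha+1=\beta$, and the event identity $\{U_{(\alpha)}\le y\}=\{|\{i: U_i\le y\}|\ge\alpha\}$ converts the Beta CDF into a binomial upper tail, $\Pr(\mathrm{Bin}(n,y)\ge\alpha)=1-F^B_{\alpha+\beta-1,y}(\alpha-1)$. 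Your off-by-one bookkeeping is right, and you correctly identify that integrality of $\alpha,\beta$ enters precisely in making $n$ a whole number of samples (the identity has no meaning otherwise, since $F^B_{n,y}$ requires an integer number of trials). The alternative derivation, which the paper's authors presumably had in mind, is repeated integration by parts of the incomplete beta integral: $\int_0^y \frac{\Gamma(\alpha+\beta)}{\Gamma(\alpha)\Gamma(\beta)}x^{\alpha-1}(1-x)^{\beta-1}\,dx = \sum_{s=\alpha}^{n}{n \choose s}y^s(1-y)^{n-s}$ by induction on $\alpha$. That route is more computational but makes the partial binomial sum appear explicitly, which is the form actually manipulated in the proofs of Lemmas~\ref{lem:E1(t)} and~\ref{lem:pBound}; your coupling argument buys a one-line probabilistic explanation of \emph{why} the identity holds, at the cost of invoking the order-statistic density (itself an easy counting fact). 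Either way the statement is established; there is no gap in your argument.
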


\section{Proof of Lemma \ref{lem:E(t)}}
\label{app:E(t)}
Let $\tau_k$ denote the time at which $k^{th}$ trial of arm $i$ happens. Let $\tau_0=0$; Then,
\begin{eqnarray*}
\sum_{t=1}^T \Pr(i(t) = i, \overline{\Emu_i(t)}) & \le & \ExR{\sum_{k=1}^T \sum_{t=\tau_{k}+1}^{\tau_{s+1}} I(i(t)=i)I(\overline{\Emu_i(t)}) }\\
& = & \ExR{\sum_{k=0}^{T-1} I(\overline{\Emu_i(\tau_k+1)}) \sum_{t=\tau_{k}+1}^{\tau_{k+1}} I(i(t)=i)}\\
& = & \ExR{\sum_{k=0}^{T-1} I(\overline{\Emu_i(\tau_k+1)}) }\\
& \le & \ExR{\sum_{k=0}^{T-1} I(\overline{\Emu_i(\tau_k+1)}) }\\
& \le & 1+ \ExR{\sum_{k=1}^{T-1} I(\overline{\Emu_i(\tau_k+1)}) }\\
& \le & 1+ \sum_{k=1}^{T-1} \exp(-kd(\xtwo,\mu_i))\\
& \le & 1+\frac{1}{d(\xtwo,\mu_i)}\\
\end{eqnarray*}
The second last inequality follows from the observation that the event $\overline{\Emu_i(t)}$ was defined as $\hat{\mu}_i(t) > \xtwo$, where $\hat{\mu}_i(t)$ is the average of the outcomes observed from the plays of arm $i$ until time $t-1$. Thus at time $\tau_k+1$, $\hat{\mu}_i(\tau_k+1)$ is simply the average of the outcomes observed from $k$ i.i.d. plays of arm $i$, each of which is a Bernoulli trial with mean $\mu_i$. Using Chernoff-Hoeffding bounds (Fact \ref{fact:CHbound1}), we obtain that $\Pr(\hat{\mu}_i(\tau_k+1)>\xtwo) \le e^{-kd(\xtwo,\mu_i)}$.
\qed

\section{Proof of Lemma \ref{lem:E1(t)}}
\label{app:E1(t)}
\begin{eqnarray*}
\PrR{i(t)=i, \overline{\Etheta_i(t)} \condR  \Emu_i(t), {\cal F}_{t-1}} & \le & \PrR{\theta_i(t) > \xone \condR \hat{\mu}_i(t) \le \xtwo, {\cal F}_{t-1}}\\
&  = & \PrR{Beta(\hat{\mu}_i(t)k_i(t)+1, (1-\hat{\mu}_i(t))k_i(t)+1)  > \xone \condR \hat{\mu}_i(t) \le \xtwo} \\
&  \le & \PrR{Beta(\xtwo k_i(t)+1, (1-\xtwo)k_i(t)+1) > \xone} \\
& = & F^{B}_{k_i(t)+1, \xone}(\xtwo k_i(t))    \;\;\;\;\; (Fact~\ref{fact:beta-binomial})\\
& \le & F^{B}_{k_i(t), \xone}(\xtwo k_i(t))\\
& \le & e^{-k_i(t)d(\xtwo,\xone)},
\end{eqnarray*}
where the last inequality follows from Chernoff-Hoeffding bounds (refer to Fact \ref{fact:CHbound1}).
Therefore, for $t$ such that $k_i(t) >L_i(T)$,
$$\PrR{i(t)=i, \overline{\Etheta_i(t)} \condR  \Emu_i(t), {\cal F}_{t-1}} \le \frac{1}{T}.$$
Let $\tau$ be the largest time step until $k_i(t) \le L_i(T)$, then,

\begin{eqnarray*}
\sum_{t=1}^T \PrR{i(t)=i, \overline{\Etheta_i(t)}, \Emu_i(t)} & \le & \sum_{t=1}^T \PrR{i(t)=i, \overline{\Etheta_i(t)} \condR  \Emu_i(t)} \\
& = & \ExR{\sum_{t=1}^T \PrR{i(t)=i, \overline{\Etheta_i(t)} \condR  \Emu_i(t), {\cal F}_{t-1}}} \\
& = & \ExR{\sum_{t=1}^{\tau} \PrR{i(t)=i, \overline{\Etheta_i(t)} \condR \Emu_i(t), {\cal F}_{t-1}}  + \sum_{t=\tau+1}^{T} \PrR{i(t)=i, \overline{\Etheta_i(t)} \condR \Emu_i(t), {\cal F}_{t-1}}}\\
& \le & \ExR{\sum_{t=1}^{\tau} \PrR{i(t)=i, \overline{\Etheta_i(t)} \condR \Emu_i(t), {\cal F}_{t-1}}  + \sum_{t=\tau+1}^{T} \frac{1}{T}}\\
& \le & \ExR{\sum_{t=1}^{\tau} \PrR{i(t)=i, \overline{\Etheta_i(t)} \condR \Emu_i(t), {\cal F}_{t-1}}}  +1\\
& = & \ExR{\sum_{t=1}^{\tau} I(i(t)=i)}  +1\\
& \le & L_i(T) + 1.
\end{eqnarray*}
\qed


\section{Proof of Lemma \ref{lem:pBound}}
\label{app:pBound}
Let $k_1(t)=j, S_1(t)=s$. Let $y=\xone$. Then, $p_{i,t} = \Pr(\theta_1(t) > y) = F^B_{j+1,y}(s)$. Let $\tau_{j}+1$ denote the time step after the $(j)^{th}$ play of arm $1$. Then, $k_1(\tau_j+1) = j$, and
$$\Ex[\frac{1}{p_{i,\tau_j+1}}] = \sum_{s=0}^j \frac{f_{j,\mu_1}(s)}{F_{j+1, y}(s)}.$$
Let $\Delta'=\mu_1-y$.

\paragraph{For $j < \frac{8}{\Delta'}$:}
Let $R=\frac{\mu_1(1-y)}{y(1-\mu_1)}$, $D=y\log \frac{y}{\mu_1} + (1-y) \log\frac{1-y}{1-\mu_1}$. 
\begin{eqnarray}
 \sum_{s=0}^j \frac{f_{j,\mu_1}(s)}{F_{j+1, y}(s)} & \le & \frac{1}{1-y} \sum_{s=\lceil yj \rceil}^{j} \frac{f_{j,\mu_1}(s)}{F_{j,y}(s)}\nonumber\\
 & \le & \frac{1}{1-y} \sum_{s=0}^{\lfloor yj \rfloor} \frac{f_{j,\mu_1}(s)}{f_{j,y}(s)} + \frac{1}{1-y} \sum_{s=\lceil yj \rceil}^{j} 2 f_{j,\mu_1}(s) \nonumber\\
 & = & \frac{1}{1-y} \sum_{s=0}^{\lfloor yj \rfloor} R^{s} \frac{(1-\mu_1)^{j}}{(1-y)^j} + \frac{1}{1-y} \sum_{s=\lceil yj \rceil}^{j} 2 f_{j,\mu_1}(s) \nonumber\\
& \le & \frac{1}{1-y}  R^{yj+1} \frac{(1-\mu_1)^{j}}{(1-y)^j} + \frac{2}{\Delta'} \nonumber\\
& = & \frac{\mu_1}{\Delta'}e^{-Dj} +\frac{2}{\Delta'}\nonumber\\
& \le & \frac{3}{\Delta'}.
\end{eqnarray}

\paragraph{For $j \ge \frac{8}{\Delta'}$:} 
We will divide the sum $Sum(0,j)=\sum_{s=0}^j \frac{f_{j,\mu_1}(s)}{F_{j+1, y}(s)}$ into four partial sums and prove that
$$\begin{array}{lcl}
Sum(0, \lfloor yj \rfloor-1) & \le & \Theta\left(e^{-Dj}\frac{1}{(j+1)}\frac{1}{\Delta'^2}\right) + \Theta(e^{-2\Delta'^2j}),\\
Sum(\lfloor yj \rfloor, \lfloor yj \rfloor) & \le & 3e^{-Dj},\\
Sum(\lceil yj \rceil, \lfloor \mu_1j-\frac{\Delta'}{2}j \rfloor) & \le & \Theta(e^{-\Delta'^2j/2}),\\
Sum(\lceil \mu_1j-\frac{\Delta'}{2}j \rceil, j) & \le & 1+\frac{1}{e^{\Delta'^2j/4}-1}.\\
\end{array}$$

Together, the above estimates will prove the required bound.

We use the following bounds on the cdf of Binomial distribution \cite[Prop. A.4]{jerabek}.\\
For $s\le y(j+1)-\sqrt{(j+1)y(1-y)}$,
$$ F_{j+1,y}(s) = \Theta\left(\frac{y(j+1-s)}{y(j+1)-s} {j+1 \choose s} y^s (1-y)^{j+1-s}\right).$$
For $s\ge y(j+1)-\sqrt{(j+1)y(1-y)}$,
$$  F_{j+1,y}(s) = \Theta(1).$$

\paragraph{Bounding $Sum(0, \lfloor yj \rfloor-1)$.} Using the bounds just given, for any $s$,
\begin{eqnarray*}
\frac{f_{j,\mu_1}(s)}{F_{j+1,y}(s)} & \le & \Theta\left(\frac{f_{j,\mu_1}(s)}{\frac{y(j+1-s)}{y(j+1)-s}{j+1 \choose s} y^s (1-y)^{j+1-s}}\right) + \Theta(1) f_{j,\mu_1}(s)\\
& = & \Theta\left(\left(1-\frac{s}{y(j+1)}\right) \cdot R^s \cdot \frac{(1-\mu_1)^{j}}{(1-y)^{j+1}}\right) + \Theta(1) f_{j,\mu_1}(s).
\end{eqnarray*}

This gives
\begin{equation} \label{eqn:sum0}
Sum(0,{\lfloor yj \rfloor}-1)  \le \Theta\left( \frac{(1-\mu_1)^{j}}{(1-y)^{j+1}}\sum_{s=0}^{\lfloor yj \rfloor-1} \left(1-\frac{s}{y(j+1)}\right) \cdot R^s  \right) + \Theta(1) \sum_{s=0}^{\lfloor yj \rfloor-1} f_{j,\mu_1}(s).
\end{equation}

We now bound the first expression on the RHS.
\begin{eqnarray*}
\frac{(1-\mu_1)^{j}}{(1-y)^{j+1}}\sum_{s=0}^{\lfloor yj \rfloor-1} \left(1-\frac{s}{y(j+1)}\right) \cdot R^s & = & \frac{(1-\mu_1)^{j}}{(1-y)^{j+1}} \left( \frac{R^{\lfloor yj \rfloor}-1}{R-1} - \frac{1}{y(j+1)} \left( \frac{(\lfloor yj \rfloor-1) R^{\lfloor yj \rfloor}}{R-1}- \frac{R^{\lfloor yj \rfloor}-R}{(R-1)^2}\right)\right)\\
& \le &  \frac{(1-\mu_1)^{j}}{(1-y)^{j+1}}\left(\frac{1}{y(j+1)} \frac{R^{\lfloor yj \rfloor}}{(R-1)^2} + \frac{(y(j+1)-\lfloor yj\rfloor+1)}{y(j+1)} \frac{R^{\lfloor yj \rfloor}}{(R-1)}\right)\\
& \le & \frac{(1-\mu_1)^{j}}{(1-y)^{j+1}} \frac{3}{y(j+1)} \frac{R^{\lfloor yj \rfloor+1}}{(R-1)^2}\\
& \le & e^{-Dj} \frac{3}{y(1-y)(j+1)} \frac{R}{(R-1)^2}\\
\end{eqnarray*}
The last inequality uses 
$$\frac{(1-\mu_1)^{j}}{(1-y)^{j}} R^{\lfloor yj \rfloor} \le \frac{(1-\mu_1)^{j}}{(1-y)^{j}} R^{yj} = e^{-Dj}.$$
Now, $R-1 = \frac{\mu_1(1-y)}{y(1-\mu_1)}-1=\frac{\mu_1-y}{y(1-\mu_1)}$. And, $\frac{R}{R-1} =  \frac{\mu_1(1-y)}{\mu_1-y}$.
Therefore,

$$\frac{1}{y(1-y)(j+1)} \frac{R}{(R-1)^2} = \frac{1}{y(1-y)(j+1)} \cdot \frac{\mu_1(1-y)}{\mu_1-y}  \cdot \frac{y(1-\mu_1)}{\mu_1-y} = \frac{1}{(j+1)}\frac{\mu_1(1-\mu_1)}{(\mu_1-y)^2}.$$
Substituting, we get
\begin{eqnarray*}
\frac{(1-\mu_1)^{j}}{(1-y)^{j+1}}\sum_{s=0}^{\lfloor yj \rfloor} \left(1-\frac{s}{y(j+1)}\right) \cdot R^s & \le & e^{-Dj}\frac{1}{(j+1)}\frac{\mu_1(1-\mu_1)}{(\mu_1-y)^2}.
\end{eqnarray*}

Substituting in \eqref{eqn:sum0}
\begin{equation}
Sum(0,\lfloor yj \rfloor-1)  \le  \Theta\left(e^{-Dj}\frac{1}{(j+1)}\frac{1}{\Delta'^2}\right) + \Theta(1) \sum_{s=0}^{\lfloor yj \rfloor-1} f_{j,\mu_1}(s) \le \Theta\left(e^{-Dj}\frac{1}{(j+1)}\frac{1}{\Delta'^2}\right) + \Theta(e^{-2(\mu_1-y)^2j}).
\end{equation}

\paragraph{Bounding $Sum(\lfloor yj \rfloor, \lfloor yj \rfloor)$.} We use $\frac{f_{j,\mu_1}(s)}{F_{j+1,y}(s)} \le \frac{f_{j,\mu_1}(s)}{f_{j+1,y}(s)} = \left(1-\frac{s}{j+1}\right) R^s \frac{(1-\mu_1)^{j}}{(1-y)^{j+1}}$, to get
\begin{eqnarray}
Sum(\lfloor yj \rfloor, \lfloor yj \rfloor) & = & \frac{f_{j,\mu_1}(\lfloor yj \rfloor)}{F_{j+1,y}(\lfloor yj \rfloor)}\nonumber\\
& \le & \left(1-\frac{yj-1}{j+1}\right) R^{yj} \frac{(1-\mu_1)^{j}}{(1-y)^{j+1}} \nonumber\\
& \le & \frac{(1-y+\frac{2}{j+1})}{1-y} R^{yj} \frac{(1-\mu_1)^{j}}{(1-y)^{j}}\nonumber\\
& \le & 3e^{-Dj}.
\end{eqnarray}
The last inequality uses $j\ge \frac{1}{\Delta'} \ge \frac{1}{1-y}$.

\paragraph{Bounding $Sum(\lceil yj \rceil, \lfloor \mu_1j-\frac{\Delta'}{2}j \rfloor)$.}
Now, if $j>\frac{1}{\Delta'}$,then $\sqrt{(j+1)y(1-y)} >\sqrt{y} > y$, so $y(j+1)-\sqrt{(j+1)y(1-y)} < yj \le \lceil yj \rceil$. Therefore, for $s\ge \lceil yj \rceil$,
$  F_{j+1,y}(s) = \Theta(1).$ Using this observation, we derive the following.

\begin{eqnarray}
Sum(\lceil yj \rceil, \lfloor \mu_1j-\frac{\Delta'}{2}j \rfloor) & = &   \sum_{s=\lceil yj \rceil}^{\lfloor \mu_1j-\frac{\Delta'}{2}j \rfloor} \frac{f_{j,\mu_1}(s)}{F_{j+1,y}(s)}\nonumber\\
& = & \Theta\left( \sum_{s=\lceil yj \rceil}^{\lfloor \mu_1j-\frac{\Delta'}{2}j \rfloor} f_{j,\mu_1}(s)\right)\nonumber\\
& \le & \Theta(e^{-2\left(\mu_1j - \lfloor \mu_1j-\frac{\Delta'}{2}j \rfloor\right)^2/j})\nonumber\\
& = & \Theta(e^{-\Delta'^2j/2}),
\end{eqnarray}
where the inequality follows using Chernoff-Hoeffding bounds (refer to Fact \ref{fact:CHbound2}).

\paragraph{Bounding $Sum(\lceil \mu_1j-\frac{\Delta'}{2}j \rceil, j)$.}
For $s\ge \lceil \mu_1j-\frac{\Delta'}{2}j \rceil = \lceil yj+\frac{\Delta'}{2}j \rceil$, again using Chernoff-Hoeffding bounds from Fact \ref{fact:CHbound2},
$$F_{j+1,y}(s) \ge 1-e^{-2(yj+\frac{\Delta'}{2}j -y(j+1))^2/(j+1)} \ge 1-e^{2\Delta'} e^{-\Delta'^2j/2} \ge 1-e^{\Delta'^2j/4} e^{-\Delta'^2j/2} = 1-e^{-\Delta'^2j/4}.$$ 
The last inequality uses $j\ge \frac{8}{\Delta'}$.
\begin{eqnarray}
Sum(\lceil \mu_1j-\frac{\Delta'}{2}j \rceil, j)& = & \sum_{s=\lceil \mu_1j-\frac{\Delta'}{2}j \rceil}^{j} \frac{f_{j,\mu_1}(s)}{F_{j+1,y}(s)} \nonumber\\
&\le & \frac{1}{1-e^{-\Delta'^2j/4}}\nonumber\\
& =& 1+\frac{1}{e^{\Delta'^2j/4}-1}. 
\end{eqnarray}

Combining, we get for $j\ge \frac{8}{\Delta'}$,
$$ \Ex[\frac{1}{p_{i,\tau_{j+1}}}] \le 1+ \Theta(e^{-\Delta'^2j/2} + \frac{1}{(j+1)\Delta'^2} e^{-Dj} + \frac{1}{e^{\Delta'^2j/4}-1})$$
\qed

\comment{
For $j>\frac{y}{\mu_1-y}$, $\lceil \mu_1 j \rceil > y(j+1)$. Therefore, for such $j$,
\begin{eqnarray*}
\sum_{s=\lceil \mu_1 j \rceil}^j \frac{f_{j,\mu_1}(s)}{F_{j+1, y}(s)} & \le & \sum_{s=\lceil \mu_1 j \rceil}^j \frac{f_{j,\mu_1}(s)}{\left(1-e^{-2\frac{(s-y(j+1))^2}{(j+1)}}\right)} \\
& \le & (1+  \max_{s \in \{\lceil \mu_1 j \rceil, \ldots, j\}}\frac{1}{\left(1-e^{-2\frac{(s-y(j+1))^2}{(j+1)}}\right)} \left(\sum_{s=\lceil \mu_1 j \rceil}^j f_{j,\mu_1}(s)\right) \\
& \le & \frac{1}{2} \cdot \frac{1}{\left(1-e^{-2\frac{(\mu_1j-y(j+1))^2}{(j+1)}}\right)} \\
& \le & \frac{1}{2} \cdot \frac{1}{\left(1-e^{4} e^{-2(\mu_1-y)^2(j+1)}\right)}.
\end{eqnarray*}

\begin{eqnarray*}
\sum_{s=0}^{\lfloor \mu_1j \rfloor} \frac{f_{j,\mu_1}(s)}{F_{j+1, y}(s)} & \le & \sum_{s=0}^{\lfloor \mu_1j \rfloor}\frac{f_{j,\mu_1}(s)}{\left(1-e^{-\frac{2(s-y(j+1))^2}{(j+1)}}\right)} \\
& \le & \sum_{s=0}^{\lfloor \mu_1j \rfloor} f_{j,\mu_1}(s) \left(1+2e^{-\frac{2(s-y(j+1))^2}{(j+1)}}\right) \\
& \le & \frac{1}{2} + \sum_{s=0}^{\lfloor \mu_1j \rfloor} f_{j,\mu_1}(s) \left(2e^{-\frac{2(s-y(j+1))^2}{(j+1)}}\right) \\
& \le & \frac{1}{2} + \sum_{s=0}^{\lfloor yj + \frac{(\mu_1-y)j}{2}\rfloor} f_{j,\mu_1}(s) \left(2e^{-\frac{2(s-y(j+1))^2}{(j+1)}}\right) + \sum_{s=\lceil yj + \frac{(\mu_1-y)j}{2}\rceil}^{\lfloor \mu_1j \rfloor} f_{j,\mu_1}(s) \left(2e^{-\frac{2(s-y(j+1))^2}{(j+1)}}\right) \\
& = & \frac{1}{2} + \sum_{s=0}^{\lfloor \mu_1j - \frac{(\mu_1-y)j}{2}\rfloor} f_{j,\mu_1}(s) \left(2e^{-\frac{2(s-y(j+1))^2}{(j+1)}}\right) + \sum_{s=\lceil yj + \frac{(\mu_1-y)j}{2}\rceil}^{\lfloor \mu_1j \rfloor} f_{j,\mu_1}(s) \left(2e^{-\frac{2(s-y(j+1))^2}{(j+1)}}\right) \\
& \le & \frac{1}{2} + 2F^B_{j,\mu_1}(\lfloor \mu_1j - \frac{(\mu_1-y)j}{2}\rfloor) + \frac{1}{2} \left(2e^{-\frac{2(yj+(\mu_1-y)j/2-y(j+1))^2}{(j+1)}}\right) \\
& \le & \frac{1}{2} + 2e^{-\frac{(\mu_1-y)^2j}{2}} + e^2e^{-\frac{(\mu_1-y)^2}{2}(j+1)}\\
& \le & \frac{1}{2} + 10 e^{-\frac{(\mu_1-y)^2j}{2}}.
\end{eqnarray*}
Combining the two, we get 
$$\Ex[\frac{1}{p_{i,\tau_{j}+1}}] \le  1 + 2e^4 e^{-\frac{(\mu_1-y)^2j}{2}}$$
}

\end{document}